\documentclass{article}

\usepackage[main, preprint]{neurips_2026}

\usepackage[utf8]{inputenc}
\usepackage[T1]{fontenc}

\usepackage{url}
\usepackage{booktabs}
\usepackage{amsfonts}
\usepackage{amsmath}
\usepackage{amssymb}
\usepackage{amsthm}
\usepackage{nicefrac}
\usepackage{microtype}
\usepackage{xcolor}
\usepackage{graphicx}
\usepackage{float}
\usepackage{multirow}
\usepackage{subcaption}
\newfloat{algorithm}{t}{loa}
\floatname{algorithm}{Algorithm}
\usepackage{hyperref}

\newtheorem{proposition}{Proposition}

\newtheorem{remark}[proposition]{Remark}

\DeclareMathOperator{\softmax}{softmax}
\DeclareMathOperator{\Var}{Var}
\newcommand{\R}{\mathbb{R}}
\newcommand{\E}{\mathbb{E}}
\newcommand{\M}{\mathcal{M}}
\newcommand{\inner}[2]{\langle #1, #2 \rangle}
\newcommand{\norm}[1]{\|#1\|}

\title{Model Compression with Exact Budget Constraints\\via Riemannian Manifolds}

\author{
  Michael Helcig\textsuperscript{\dag} \\
  ETH Z\"urich \\
  \And
  Dan Alistarh \\
  IST Austria \\
}

\begin{document}

\maketitle
\renewcommand{\thefootnote}{\dag}
\footnotetext{Corresponding author: \texttt{mhelcig@ethz.ch}}
\renewcommand{\thefootnote}{\arabic{footnote}}

\begin{abstract}
Assigning one of $K$ options to each of $N$ groups under a total cost budget is a recurring problem in efficient AI, for instance in mixed-precision quantization, non-uniform pruning, and expert selection.
The objective (model loss) depends on all assignments jointly and does not decompose across groups, which means that combinatorial solvers can only optimize proxy objectives.
Methods such as evolutionary search evaluates the actual loss but lack gradients, while penalty-based methods  enforce the budget only approximately and can require heavy hyperparameter tuning. We present a new approach to this problem by showing that under softmax relaxation, the budget constraint defines a smooth Riemannian manifold in logit space with unusually clean geometry. Specifically, the normal vector is available in closed form, shifting logits along the cost vector changes expected cost monotonically, and the vector transport reduces to a single inner product.
Building on this, we propose Riemannian Constrained Optimization (RCO), which wraps tangent projection, binary-search retraction, and momentum transport around a standard Adam step.
Combined with Gumbel straight-through estimation and budget-constrained dynamic programming for discrete feasibility, RCO provides first-order optimization of the actual loss under exact budget enforcement, with no constraint-related hyperparameters.
RCO exceeds or matches the performance of state-of-the-art methods in both synthetic problems and realistic LLM compression settings, often at considerably lower wall-clock cost.
Source code is available at \url{https://github.com/IST-DASLab/RCO}.

\end{abstract}

\section{Introduction}
\label{sec:intro}

Constrained optimization over discrete choices is a pervasive problem in efficient machine learning. A common instance is \textbf{budget-constrained discrete assignment}: choose one of $K$ options for each of $N$ target groups (e.g., a compression level for each layer of a neural network), minimizing an objective that depends on all assignments jointly (the model accuracy loss), subject to a total cost constraint (the total model size).
The joint dependence between groups, e.g., the interactions between layers, makes the objective non-decomposable, so it cannot simply be written as a sum of per-group terms.
Dynamic programming (DP), the standard tool for budget-constrained assignment, relies precisely on this additive structure, and can therefore only optimize proxy objectives.
Concretely, variants of this problem have been used in model compression where each layer or Transformer block can be assigned a different compression degree while minimizing model loss, e.g., for mixed-precision quantization~\citep{dong2019hawq, dong2020hawqv2, yao2021hawqv3}, non-uniform pruning~\citep{frantar2022spdy, yin2024owl}, MoE expert pruning~\citep{lu2024eep, lasby2025reap, liu2026evoesap}, and even MoE routing under load-balancing constraints~\citep{Fedus2021SwitchTransformers, zhou2022expert}.

\begin{figure}[h!]
  \centering
  \includegraphics[width=\linewidth]{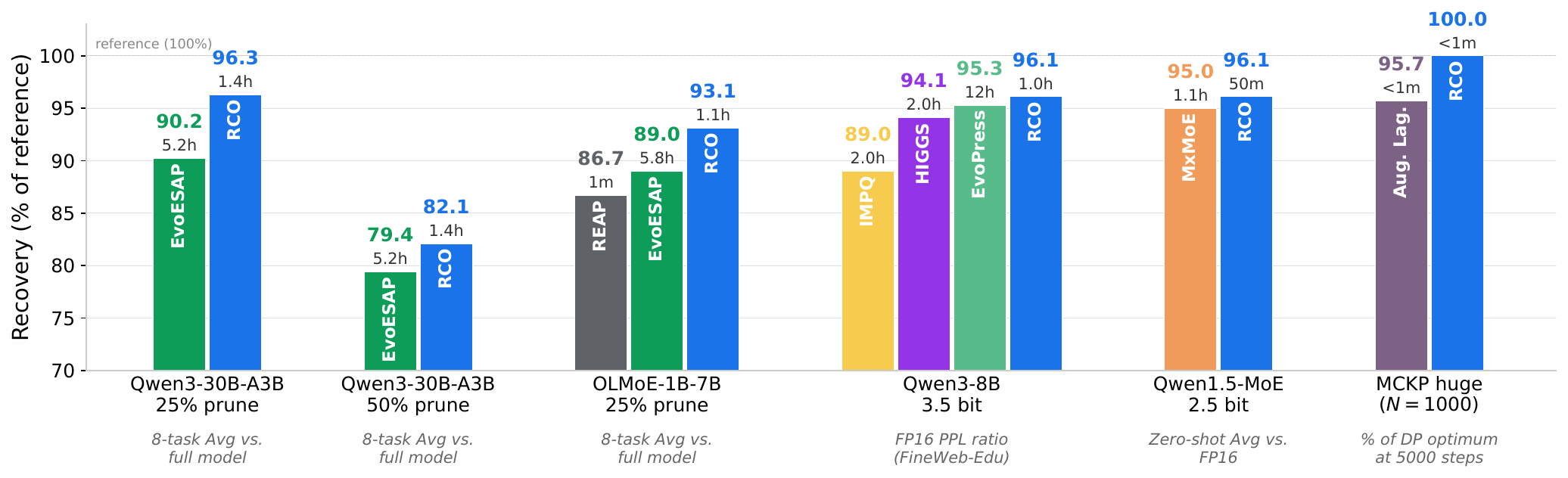}
  \caption{Recovery vs.\ baselines across experiments: RCO matches or exceeds every baseline at lower wall-clock. Above each bar: recovery (\%, bold) and wall-clock time (small).}
  \label{fig:summary}
\end{figure}

Sensitivity methods address the non-decomposability by scoring groups independently, giving each layer a ``sensitivity'' and finding a suitable allocation via dynamic programming or integer linear programming~\citep{dong2019hawq, dong2020hawqv2, yao2021hawqv3, yin2024owl, frantar2022spdy, li2023llmmq, malinovskii2024higgs, duanmu2025mxmoe}. However, sensitivity-aware allocation was shown to fail at high compression rates by \citet{sieberling2024evopress}. 
Directly evaluating the actual model loss under compression combinations avoids this, but existing methods require hundreds of  forward passes to converge~\citep{sieberling2024evopress, liu2026evoesap}.
Gradient-based search through continuous relaxation could be more efficient, but the standard approach~\citep{cai2019proxyless, wu2019fbnet, huang2022sdq} adds a quadratic penalty $\lambda\,(C(\boldsymbol{\alpha}) - B)^2$ on the deviation between the total expected cost $C(\boldsymbol{\alpha})$ and the budget $B$. This never satisfies the budget exactly, and the penalty coefficient $\lambda$ needs to be carefully tuned: too small and the budget is violated, too large and the constraint gradient dominates.
On hard instances, as we show in Section~\ref{sec:mckp}, Lagrangian methods oscillate.
Projection-free methods such as Frank-Wolfe maintain feasibility by construction~\citep{nayman2021hardcore} but do not transport adaptive optimizer state.

We parameterize each group's compression assignment by logits $\boldsymbol{\alpha}_i \in \R^K$ with softmax probabilities $\mathbf{p}_i = \softmax(\boldsymbol{\alpha}_i)$, and let $\mathbf{c} = (c_1, \ldots, c_K) \in \R^K$ collect the option costs and $w_i$ the weight of group $i$. The total expected cost is $C(\boldsymbol{\alpha}) = \sum_i w_i \inner{\mathbf{p}_i}{\mathbf{c}}$, a smooth function of the concatenated logits $\boldsymbol{\alpha} \in \R^{NK}$. Given a calibration loss $L$ and a budget $B$, the problem is $\min_{\boldsymbol{\alpha}} L(\boldsymbol{\alpha})$ subject to $C(\boldsymbol{\alpha}) = B$. The logits $\boldsymbol{\alpha}$ are the optimization variables; $\mathbf{c}$, $w_i$, and $B$ are fixed inputs.

The feasible set $\M = \{C(\boldsymbol{\alpha}) = B\}$ has well-behaved geometric structure. The gradient $\nabla C$ is everywhere nonzero when options have distinct costs, so by the regular value theorem~\citep[Theorem~3.2]{boumal2023introduction} $\M$ is a smooth $(NK{-}1)$-dimensional Riemannian submanifold of $\R^{NK}$. We call this the \emph{budget manifold}; one can optimize directly on it, projecting gradients onto its tangent plane (the subspace of budget-preserving directions) and retracting (projecting back) onto its surface after each step, enforcing the budget exactly at every iterate.

The efficiently-accessible geometry of $\M$ rests on the softmax parameterization:
\begin{itemize}
  \item \textbf{Closed-form normal.} The constraint gradient is $(\nabla C)_{ik} = w_i\, p_{ik}(c_k - \E_{p_i}[c])$ (Eq.~\ref{eq:normal}); projecting onto the tangent plane costs a single inner product.
  \item \textbf{Monotonic retraction.} Shifting logits along $\mathbf{c}$ changes $C$ monotonically with derivative $\sum_i w_i\,\Var_{p_i}[c] > 0$ (Proposition~\ref{prop:retraction}); binary search therefore returns any iterate to $\M$ exponentially fast.
  \item \textbf{Cheap transport.} Since $\M$ has codimension one, vector transport (adjusting the optimizer's momentum to the new tangent plane) is another inner product.
\end{itemize}
The resulting algorithm wraps these three operations around a standard Adam step, giving first-order optimization of the end-to-end loss under exact budget enforcement at negligible overhead.

\newpage
\paragraph{Contributions.} In summary, we introduce the following main contributions:
\begin{itemize}\setlength{\itemsep}{0pt}
  \item \textbf{The budget manifold} (Section~\ref{sec:method}). The level set $\{C(\boldsymbol{\alpha}) = B\}$ is a classical m-flat submanifold; under the Euclidean (rather than Fisher) ambient metric it has closed-form normals, monotonic retraction (Proposition~\ref{prop:retraction}), and single-inner-product transport.
  \item \textbf{Riemannian Constrained Optimization (RCO)} (Section~\ref{sec:application}). We wrap projection, retraction, and transport around Adam with no constraint-related hyperparameters. Gumbel-STE with budget-constrained DP handles discrete forward-pass feasibility; the manifold handles continuous backward-pass feasibility.
  \item \textbf{Constraint robustness under biased gradients} (Section~\ref{sec:application}, Appendix~\ref{app:bias}). Tangent projection eliminates the constraint-normal component of any STE gradient bias (Proposition~\ref{prop:bias}), so the budget is enforced exactly regardless of gradient quality.
  \item \textbf{Empirical validation} (Section~\ref{sec:experiments}). On synthetic knapsack ($N{=}1000$, $K{=}32$), RCO reaches 1\% of the DP optimum where vanilla Lagrangian plateaus at 37.6\%; on LLM compression, RCO matches or exceeds evolutionary search at 3--16$\times$ lower wall-clock.
\end{itemize}

\section{The Budget Manifold}
\label{sec:method}

We briefly review the relevant concepts from Riemannian geometry; for a textbook treatment, see \citet{boumal2023introduction}.
A \emph{manifold} is a smooth surface that may be curved globally but locally resembles flat Euclidean space.
A \emph{Riemannian manifold} additionally carries an inner product on each tangent space, providing notions of length, angle, and gradient on the surface.
At each point, the \emph{tangent plane} is the subspace of directions tangent to the surface, and the \emph{normal vector} points perpendicular to it.
Optimizing on a manifold requires three operations: \emph{tangent projection} (restricting gradients to the tangent plane), \emph{retraction} (mapping an iterate that has drifted off the surface back onto it), and \emph{vector transport} (moving vectors such as optimizer momentum from one tangent plane to another as the iterate moves along the surface).

Consider $N$ groups, each to be assigned one of $K$ discrete options.
Group $i$ carries weight $w_i > 0$ (e.g., the number of parameters in a network layer), and option $k$ has cost $c_k > 0$ (e.g., bits per parameter).
We parameterize the assignment distribution for group $i$ by logits $\boldsymbol{\alpha}_i \in \R^K$ with $\mathbf{p}_i = \softmax(\boldsymbol{\alpha}_i)$, and define the total expected cost as
\begin{equation}
  \label{eq:cost}
  C(\boldsymbol{\alpha})
  \;=\; \sum_{i=1}^{N} w_i \sum_{k=1}^{K} p_{ik}\, c_k
  \;=\; \sum_{i=1}^{N} w_i \inner{\mathbf{p}_i}{\mathbf{c}}.
\end{equation}
Given a budget $B$ with $\min_k c_k < B / \sum_i w_i < \max_k c_k$, the feasible set is $\M = \{ \boldsymbol{\alpha} \in \R^{NK} : C(\boldsymbol{\alpha}) = B \}$, and the goal is to minimize an objective $L(\boldsymbol{\alpha})$ on~$\M$.

\begin{proposition}[Manifold structure]
  \label{prop:manifold}
  $C$ is smooth.
  If the costs $c_1, \ldots, c_K$ are not all equal, then $\nabla C(\boldsymbol{\alpha}) \neq \mathbf{0}$ for all $\boldsymbol{\alpha}$, and $\M$ is a smooth $(NK{-}1)$-dimensional submanifold of $\R^{NK}$.
\end{proposition}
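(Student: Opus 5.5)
Smoothness of $C$ is immediate: each $p_{ik}=e^{\alpha_{ik}}/\sum_j e^{\alpha_{ij}}$ is a quotient of smooth functions with strictly positive denominator. By \eqref{eq:cost}, $C$ is a finite linear combination of these with fixed coefficients $w_i c_k$, so $C\in C^\infty(\R^{NK})$.

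Next I compute the gradient from the softmax Jacobian $\partial p_{ik}/\partial\alpha_{ij}=p_{ik}(\delta_{kj}-p_{ij})$. Group $i$'s logits enter only through $\mathbf{p}_i$, so
\[
(\nabla C)_{ij}=w_i\sum_k c_k\,p_{ik}(\delta_{kj}-p_{ij})=w_i\,p_{ij}\bigl(c_j-\E_{p_i}[c]\bigr),
\]
which is the closed-form normal quoted in the introduction.

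To show $\nabla C(\boldsymbol{\alpha})\neq\mathbf{0}$, I fix any group $i$ and argue by contradiction. Suppose every component $(\nabla C)_{ij}$ vanishes. Since $w_i>0$ and $p_{ij}>0$ for all $j$ (softmax has full support), this forces $c_j=\E_{p_i}[c]$ for every $j$, so all costs are equal, contrary to the hypothesis.

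An equivalent and cleaner route is to pair the gradient with the direction $\mathbf{c}$ in group $i$:
\[
\sum_j (\nabla C)_{ij}c_j=w_i\Var_{p_i}[c].
\]
This is strictly positive, because $p_i$ has full support and $c$ is nonconstant. This is the key step, and the only place the full-support property of softmax is used.

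Finally, I apply the regular value theorem \citep[Theorem~3.2]{boumal2023introduction}. Because the gradient never vanishes, $DC(\boldsymbol{\alpha})$ is surjective onto $\R$ at every point, so every value, in particular $B$, is a regular value. Hence $\M=C^{-1}(B)$ is an embedded submanifold of dimension $NK-1$.

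Nonemptiness of $\M$ follows from the assumption $\min_k c_k<B/\sum_i w_i<\max_k c_k$ together with continuity. Taking every group's logits to $+\infty$ on the cheapest (respectively most expensive) option drives $C$ arbitrarily close to $\min_k c_k\sum_i w_i$ (respectively $\max_k c_k\sum_i w_i$). The intermediate value theorem along a path between two such points then yields a point with $C=B$. There is no real obstacle here; the only care needed is the full-support observation.
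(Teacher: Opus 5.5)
Your proof is correct and takes essentially the same route as the paper: compute $(\nabla C)_{ik} = w_i\, p_{ik}(c_k - \E_{p_i}[c])$ from the softmax Jacobian, use $p_{ik} > 0$ and non-constant costs to force a nonzero entry in each group, then apply the regular value theorem. Your variance-pairing identity (the computation the paper uses for Proposition~\ref{prop:retraction}) and your intermediate-value argument that $\M \neq \emptyset$ go beyond the paper; the nonemptiness check is a useful addition, since the regular value theorem alone does not rule out an empty level set.
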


The proof is in Appendix~\ref{app:manifold_proofs}. The gradient is nonzero because softmax produces strictly positive probabilities, so every group has positive cost variance.
Concretely, $\M$ is a smooth surface in logit space: at every point it has a well-defined tangent plane (budget-preserving directions) and a normal direction (the budget-changing direction), which are the two ingredients needed for constrained optimization on the surface.

\begin{proposition}[Normal vector]
  \label{prop:normal}
  The gradient of $C$ with respect to $\boldsymbol{\alpha}$ is
  \begin{equation}
    \label{eq:normal}
    \bigl(\nabla C(\boldsymbol{\alpha})\bigr)_{ik}
    \;=\; w_i\, p_{ik}\bigl(c_k - \E_{p_i}[c]\bigr),
  \end{equation}
  where $\E_{p_i}[c] = \sum_k p_{ik}\, c_k$ (Appendix~\ref{app:manifold_proofs}).
  This vector is normal to $\M$ at every point.
\end{proposition}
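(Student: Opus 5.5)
Because $C$ is a sum of per-group terms, the gradient can be computed one group at a time. Group $i$'s logits enter $C$ only through the summand $w_i\inner{\mathbf{p}_i}{\mathbf{c}}$, so $\partial C/\partial\alpha_{ik} = w_i \sum_j c_j\, \partial p_{ij}/\partial \alpha_{ik}$. The only ingredient needed is the standard softmax Jacobian $\partial p_{ij}/\partial\alpha_{ik} = p_{ij}(\delta_{jk} - p_{ik})$. We verify it by differentiating $p_{ij} = e^{\alpha_{ij}}/\sum_l e^{\alpha_{il}}$ with the quotient rule and treating the cases $j=k$ and $j\neq k$ separately.

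Substituting the Jacobian gives
\begin{equation*}
  \frac{\partial C}{\partial \alpha_{ik}} \;=\; w_i\Bigl(c_k p_{ik} - p_{ik}\sum_j p_{ij} c_j\Bigr) \;=\; w_i\, p_{ik}\bigl(c_k - \E_{p_i}[c]\bigr),
\end{equation*}
which is Eq.~\eqref{eq:normal}. Smoothness of $C$ is immediate, since it is a finite sum of compositions of exponentials and rational functions with nonvanishing denominators. This is already recorded in Proposition~\ref{prop:manifold}.

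For the normality claim, we use the characterization of the tangent space of a regular level set. By Proposition~\ref{prop:manifold}, $\nabla C(\boldsymbol{\alpha})\neq\mathbf{0}$ on $\M$, so $B$ is a regular value. The tangent space is then $T_{\boldsymbol{\alpha}}\M = \ker DC(\boldsymbol{\alpha}) = \{\mathbf{v} : \inner{\nabla C(\boldsymbol{\alpha})}{\mathbf{v}} = 0\}$, which is a hyperplane. To see that every tangent vector lies in this kernel, take any smooth curve $\gamma$ in $\M$ with $\gamma(0)=\boldsymbol{\alpha}$. Since $C(\gamma(t))\equiv B$, differentiating gives $\inner{\nabla C}{\gamma'(0)}=0$. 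The reverse inclusion follows from a dimension count: both spaces have dimension $NK-1$. Hence $\nabla C(\boldsymbol{\alpha})$ spans the orthogonal complement of $T_{\boldsymbol{\alpha}}\M$ in the Euclidean metric, so it is a nonzero normal vector at every point.

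No step is a real obstacle. The most error-prone part is the bookkeeping in the softmax Jacobian, in particular getting the sign of the $-p_{ij}p_{ik}$ cross term right. The manifold-theoretic part reduces directly to Proposition~\ref{prop:manifold} together with the regular value theorem cited earlier.
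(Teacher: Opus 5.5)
Your proof is correct and follows the paper's route: substituting the softmax Jacobian $\partial p_{ij}/\partial\alpha_{ik} = p_{ij}(\delta_{jk}-p_{ik})$ into Eq.~\eqref{eq:cost} gives Eq.~\eqref{eq:normal} directly. Your argument that $T_{\boldsymbol{\alpha}}\M = \{\mathbf{v} : \inner{\nabla C(\boldsymbol{\alpha})}{\mathbf{v}} = 0\}$ (differentiate along curves, then compare dimensions) spells out the normality claim that the paper leaves implicit, and citing Proposition~\ref{prop:manifold} for $\nabla C \neq \mathbf{0}$ is not circular because you derive the formula first; the unequal-costs hypothesis is guaranteed by the standing assumption $\min_k c_k < B/\sum_i w_i < \max_k c_k$.
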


Each entry measures how much option $k$ deviates from group $i$'s current expected cost, scaled by the probability and group weight.

\begin{figure}[h!]
  \centering
  \includegraphics[width=0.92\linewidth]{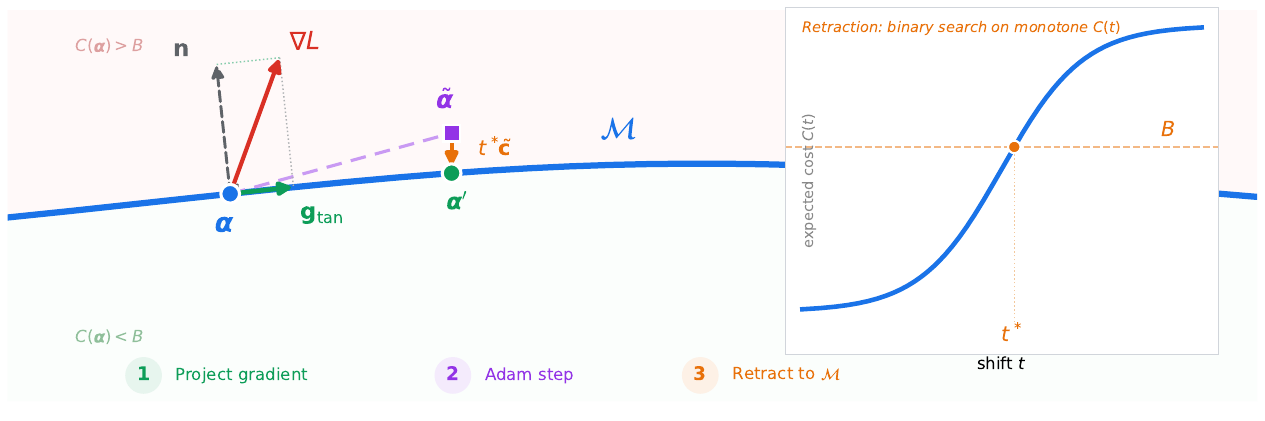}
  \caption{One optimization step on $\M$: project gradient, Adam step, retract via binary search.}
\label{fig:method}
\end{figure}

We optimize $L$ on $\M$ by wrapping three operations around a standard Adam step (Figure~\ref{fig:method}).
First, we project the loss gradient $\mathbf{g} = \nabla L(\boldsymbol{\alpha})$ onto the tangent space of $\M$ by subtracting its component along the normal $\mathbf{n} = \nabla C(\boldsymbol{\alpha})$: $\mathbf{g}_{\mathrm{tan}} = \mathbf{g} - (\inner{\mathbf{g}}{\mathbf{n}} / \norm{\mathbf{n}}^2)\,\mathbf{n}$.
This removes the budget-changing direction, so the optimizer sees only budget-preserving gradients.
We project before the Adam update rather than after so that the first moment $\mathbf{m}$ accumulates a tangent direction across steps; without the pre-projection, momentum would drift toward the normal and be wasted by retraction.
An Adam step along $\mathbf{g}_{\mathrm{tan}}$ may still drift off $\M$ due to manifold curvature and Adam's per-coordinate scaling.
To return, we shift all logits along the cost vector, setting $\alpha'_{ik} = \alpha_{ik} + t\, c_k$, and binary-search for $t$ such that $C(\boldsymbol{\alpha}') = B$.
This works because of the following structural property specific to softmax:

\begin{proposition}[Monotonic retraction]
  \label{prop:retraction}
  Let $\tilde{c}_{ik} = c_k$ (the cost vector broadcast across groups) and $p_i(t) = \softmax(\boldsymbol{\alpha}_i + t\mathbf{c})$.
  Then $C(\boldsymbol{\alpha} + t\tilde{\mathbf{c}})$ is strictly increasing in $t$ whenever the costs are not all equal (Appendix~\ref{app:retraction_proof}): 
  \begin{equation}
    \label{eq:retraction_deriv}
    \frac{d}{dt}\, C(\boldsymbol{\alpha} + t\tilde{\mathbf{c}})
    \;=\; \sum_{i=1}^{N} w_i\, \Var_{p_i(t)}[c]
    \;>\; 0. 
  \end{equation}
  
\end{proposition}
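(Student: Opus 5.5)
The plan is a direct chain-rule computation that reuses the gradient formula of Proposition~\ref{prop:normal}. Since $C$ is smooth (Proposition~\ref{prop:manifold}), the map $t \mapsto C(\boldsymbol{\alpha} + t\tilde{\mathbf{c}})$ is differentiable, and
\begin{equation*}
  \frac{d}{dt}\, C(\boldsymbol{\alpha} + t\tilde{\mathbf{c}})
  \;=\; \inner{\nabla C(\boldsymbol{\alpha} + t\tilde{\mathbf{c}})}{\tilde{\mathbf{c}}}.
\end{equation*}
Applying Eq.~\eqref{eq:normal} at the shifted point $\boldsymbol{\alpha} + t\tilde{\mathbf{c}}$, whose group-$i$ probabilities are exactly $p_i(t)$, the right-hand side becomes $\sum_i w_i \sum_k p_{ik}(t)\, c_k\,(c_k - \E_{p_i(t)}[c])$. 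Because $\sum_k p_{ik}(t)(c_k - \E_{p_i(t)}[c]) = 0$, we may subtract $\E_{p_i(t)}[c]$ from the first factor $c_k$ without changing the sum. Each inner sum is then $\sum_k p_{ik}(t)(c_k - \E_{p_i(t)}[c])^2 = \Var_{p_i(t)}[c]$, which gives Eq.~\eqref{eq:retraction_deriv}.

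For strict positivity we use two facts: softmax gives $p_{ik}(t) > 0$ for every $k$ and every $t$, and $w_i > 0$. A variance under a distribution with full support is zero only if $c$ is constant on that support, that is, only if all $c_k$ are equal. This is excluded by hypothesis, so every summand is strictly positive. A derivative that is strictly positive everywhere implies, by the mean value theorem, that $C(\boldsymbol{\alpha} + t\tilde{\mathbf{c}})$ is strictly increasing in $t$.

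No step is a real obstacle. The only point needing care is the centering trick that turns $\sum_k p_{ik} c_k (c_k - \E[c])$ into a variance. If one prefers not to rely on Proposition~\ref{prop:normal}, the alternative is to differentiate $\inner{\mathbf{p}_i(t)}{\mathbf{c}}$ directly using the softmax derivative $\dot p_{ik} = p_{ik}(c_k - \E_{p_i}[c])$, which follows from $\frac{d}{dt}\log p_{ik}(t) = c_k - \E_{p_i(t)}[c]$; this gives the same variance identity. As a remark, the limits $t\to\pm\infty$, where $p_i(t)$ concentrates on the maximal or minimal cost so that $C$ tends to $\max_k c_k \sum_i w_i$ or $\min_k c_k \sum_i w_i$, together with the budget assumption show that a root $C=B$ exists. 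This is not required for the monotonicity claim itself.
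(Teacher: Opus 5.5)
Your proposal is correct and follows essentially the same route as the paper: the chain rule with the gradient from Proposition~\ref{prop:normal}, a rewriting of $\sum_k p_{ik}(t)\,c_k\bigl(c_k - \E_{p_i(t)}[c]\bigr)$ as $\Var_{p_i(t)}[c]$ (the paper expands it as $\E[c^2] - (\E[c])^2$ rather than centering, which is equivalent), and strict positivity from $w_i > 0$ and $p_{ik}(t) > 0$. Your remark on the $t \to \pm\infty$ limits matches the paper's existence argument, and your explicit mean value theorem step is a small extra the paper leaves implicit.
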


Shifting logits by $t\mathbf{c}$ adds $t(c_k - c_{k'})$ to each logit difference, so higher-cost options gain probability monotonically and expected cost increases; binary search therefore returns any iterate to $\M$ exponentially fast.
The closed-form derivative~\eqref{eq:retraction_deriv} also enables Newton retraction in 2--3 iterations, which the multi-constraint extension (Appendix~\ref{app:multi}) builds on; we use binary search for the single-constraint case.
This cheapness is specific to working in logit space with the Euclidean ambient metric: standard Fisher-metric treatments of the simplex~\citep{amari2000methods, lebanon2005riemannian} replace $\nabla C$ with $F^{-1}\nabla C$ in projection and transport and follow Fisher geodesics for retraction. The Euclidean choice replaces every $F^{-1}$ with the identity, and retracting along $\tilde{\mathbf{c}}$ exploits cumulant-generating-function strict convexity of exponential families~\citep[\S3.4]{wainwright2008graphical} to reduce retraction to a scalar monotone binary search. Other manifolds lack this structure: Stiefel uses QR or polar decomposition, fixed-rank uses truncated SVD, doubly stochastic uses Sinkhorn~\citep{douik2019manifold}, and generic smooth constraints need nonlinear root-finding~\citep[\S7.7]{boumal2023introduction}.

After retraction to $\boldsymbol{\alpha}' \in \M$, the tangent plane has rotated, so we project Adam's first moment onto the new tangent space (vector transport by projection; Appendix~\ref{app:transport}): $\mathbf{m} \leftarrow \mathbf{m} - (\inner{\mathbf{m}}{\mathbf{n}'} / \norm{\mathbf{n}'}^2)\, \mathbf{n}'$.
Only the first moment is transported (cross-step momentum that must remain tangent); the second moment is per-coordinate magnitude with no tangent structure. Adam's per-coordinate scaling does rotate the update off the tangent plane (Remark~\ref{rem:adam_bias}), corrected by retraction at every step.
These three operations wrap around any first-order optimizer at negligible cost: one inner product each for projection and transport, and roughly $\lceil \log_2(R/\varepsilon) \rceil$ softmax evaluations for retraction, where $R$ is the bracket width on the shift parameter $t$ and $\varepsilon$ the cost-residual tolerance; for $R = 100$ and $\varepsilon = 10^{-8}$ this gives ${\sim}\,34$ iterations in theory and ${\le}\,45$ across all MCKP scenarios in practice (Figure~\ref{fig:retraction_iters}, Appendix~\ref{app:retraction_proof}).
The retraction, projected gradient, and transport satisfy the standard manifold-optimization axioms (Appendices~\ref{app:retraction_axioms}--\ref{app:transport}), so Riemannian gradient descent with exact gradients converges by \citet[Chapter~4]{boumal2023introduction}; Adam is a practical heuristic on top, validated empirically.
Section~\ref{sec:application} combines these operations with Gumbel straight-through estimation and budget-constrained dynamic programming into the complete optimization algorithm.

The manifold extends to $q$ simultaneous equality constraints by projecting out all $q$ normals via $\mathbf{g} - \mathbf{N}(\mathbf{N}^\top\mathbf{N})^{-1} \mathbf{N}^\top\mathbf{g}$ (Appendix~\ref{app:multi}), and to inequality constraints $C(\boldsymbol{\alpha}) \leq B$ via a slack variable $s$ with $C(\boldsymbol{\alpha}) + s^2 = B$, which converts the problem to equality on an augmented manifold that smoothly interpolates between the unconstrained ($s > 0$) and equality-constrained ($s = 0$) cases (Appendix~\ref{app:slack}).

\section{The Riemannian Constrained Optimization (RCO) Algorithm}
\label{sec:application}

The budget manifold (Section~\ref{sec:method}) provides three geometric operations for constrained optimization: tangent projection, monotonic retraction, and momentum transport.
To produce discrete assignments from continuous logits, we combine these operations with Gumbel STE and DP into a complete algorithm we call \emph{Riemannian Constrained Optimization} (RCO).

The Gumbel-STE introduces gradient bias not covered by the convergence analysis of Section~\ref{sec:method}, but the tangent projection eliminates its constraint-normal component (Proposition~\ref{prop:bias}), so the budget is enforced exactly regardless of bias; the full Adam-with-annealing pipeline is validated empirically.

\subsection{Algorithm}
\label{sec:algorithm}

RCO enforces the budget in two complementary spaces: budget-constrained DP gives discrete feasibility in the forward pass, the manifold projection gives continuous feasibility in the backward pass.
We retain the setting from Section~\ref{sec:method}: $N$ groups, each with $K$ options of cost $c_k$ and group weights $w_i$, and a total cost budget $B$.
Concretely, each forward pass draws Gumbel noise~\citep{jang2017categorical,maddison2017concrete} $G_{ik}$, forms perturbed logits $\hat{\alpha}_{ik} = (\alpha_{ik} + G_{ik})/\tau$ at temperature $\tau$, and solves the constrained assignment
\begin{equation}
  \label{eq:dp}
  \mathbf{z}^* = \arg\max_{\mathbf{z} \in \{0,1\}^{NK}}
  \sum_{i,k} z_{ik}\, \hat{\alpha}_{ik}
  \;\;\text{s.t.}\;\;
  \sum_{i} w_i \sum_{k} z_{ik}\, c_k \leq B,
  \;\; \sum_{k} z_{ik} = 1 \;\;\forall\, i,
\end{equation}
which is a multiple-choice knapsack problem solved exactly by DP in $O(NKB')$ time, where $B'$ is the budget expressed in integer cost units (e.g., total bits for quantization, total kept experts for pruning) so the DP table has $B' + 1$ entries per group.
In the backward pass, the STE~\citep{bengio2013estimating} replaces the non-differentiable $\arg\max$ with soft probabilities: the forward value $z^*_{ik}$ is kept, but gradients flow through $\hat{p}_{ik} = \softmax(\hat{\boldsymbol{\alpha}}_i)_k$, the softmax of the same perturbed logits that produced $\mathbf{z}^*$. This ensures the surrogate concentrates on the sampled mode, so the STE bias vanishes as $\tau \to 0$ and independent Gumbel samples yield independent Jacobians; the unperturbed $\softmax(\boldsymbol{\alpha}_i)$ would decouple the surrogate from the sampled assignment and suppress both effects.
The tangent projection (Section~\ref{sec:method}) removes the normal component from this gradient before it reaches the optimizer, eliminating any constraint-normal STE bias (Proposition~\ref{prop:bias}) and preventing it from accumulating in Adam's momentum.
After optimization, the final assignment is extracted by solving~\eqref{eq:dp} with $\hat{\alpha}_{ik} = \log p_{ik}$ (no noise, no temperature).
Algorithm~\ref{alg:rco} gives the complete procedure.

\begin{algorithm}[t]
\caption{Riemannian Constrained Optimization (RCO)}
\label{alg:rco}
\hrule \vspace{4pt}
\small
\textbf{Input:} Logits $\boldsymbol{\alpha}_0 \in \M$, loss $L$,
costs $\mathbf{c}\in \R^K$, weights $\mathbf{w}\in \R^N$,
budget $B$, steps $T$, Gumbel samples $g$, temperature schedule $\{\tau_t\}$
\vspace{2pt}
1: Initialize Adam state: $\mathbf{m}=\mathbf{0}$, $\mathbf{v}=\mathbf{0}$ \\
2: \textbf{for} $t=1,\ldots,T$ \textbf{do} \\
\hspace{1em}\textit{// Forward pass: Gumbel-STE + DP, averaged over $g$ samples} \\
3: \quad $\mathbf{g}\leftarrow \mathbf{0}$ \\
4: \quad \textbf{for} $j=1,\ldots,g$ \textbf{do} \\
5: \qquad Sample $G^{(j)}_{ik}\sim \mathrm{Gumbel}(0,1)$ ;
set $\hat{\alpha}^{(j)}_{ik}=(\alpha_{ik}+G^{(j)}_{ik})/\tau_t$ \\
6: \qquad $\mathbf{z}^{(j)} \leftarrow
\text{DP-solve}(\hat{\boldsymbol{\alpha}}^{(j)}, \mathbf{c}, \mathbf{w}, B)$
\hfill \textit{\small Eq.~\eqref{eq:dp}} \\
7: \qquad $\hat{\mathbf{p}}^{(j)} \leftarrow \softmax(\hat{\boldsymbol{\alpha}}^{(j)})$ ;
$\mathbf{g}\leftarrow \mathbf{g} + \tfrac{1}{g}\,\nabla_{\boldsymbol{\alpha}} L\bigl(\mathrm{sg}(\mathbf{z}^{(j)} - \hat{\mathbf{p}}^{(j)}) + \hat{\mathbf{p}}^{(j)}\bigr)$
\hfill \textit{\small STE at $\hat{\boldsymbol{\alpha}}^{(j)}$; contributes $1/\tau_t$} \\
8: \quad \textbf{end for} \\
\hspace{1em}\textit{// Tangent projection} \\
9: \quad $\mathbf{n} \leftarrow \nabla C(\boldsymbol{\alpha})$
\hfill \textit{\small Eq.~\eqref{eq:normal}} \\
10: \quad $\mathbf{g}\leftarrow \mathbf{g}-
(\inner{\mathbf{g}}{\mathbf{n}}/\norm{\mathbf{n}}^2)\,\mathbf{n}$ \\
\hspace{1em}\textit{// Optimizer step + retraction} \\
11: \quad $\boldsymbol{\alpha} \leftarrow
\textsc{Adam}(\boldsymbol{\alpha},\mathbf{g},\mathbf{m},\mathbf{v})$ \\
12: \quad Binary search for $t^*$ s.t.
$C(\boldsymbol{\alpha}+t^*\tilde{\mathbf{c}})=B$ ;
set $\alpha_{ik}\leftarrow \alpha_{ik}+t^*c_k$
\hfill \textit{\small Prop.~\ref{prop:retraction}} \\
\hspace{1em}\textit{// Momentum transport} \\
13: \quad $\mathbf{n}'\leftarrow \nabla C(\boldsymbol{\alpha})$ \\
14: \quad $\mathbf{m}\leftarrow \mathbf{m}-
(\inner{\mathbf{m}}{\mathbf{n}'}/\norm{\mathbf{n}'}^2)\,\mathbf{n}'$ \\
15: \textbf{end for} \\
16: \textbf{return}
$\text{DP-solve}(\log \mathbf{p}, \mathbf{c}, \mathbf{w}, B)$
\hfill \textit{\small Final discrete assignment}
\vspace{2pt}\hrule
\end{algorithm}

\paragraph{Initialization.}
\label{par:init}
The algorithm assumes $\boldsymbol{\alpha}_0 \in \M$.
If the initial logits do not satisfy $C(\boldsymbol{\alpha}_0) = B$ (e.g., when initialized from heuristic scores), one application of the binary-search retraction projects them onto $\M$ before the first step.

\paragraph{Temperature and variance reduction.}
The temperature $\tau$ in Algorithm~\ref{alg:rco} (line~5) controls exploration versus exploitation.
We anneal exponentially $\tau_t = \max\bigl(\tau_{\min},\; \tau_0 \cdot (\tau_{\min}/\tau_0)^{t/T}\bigr)$, from $\tau_0 = 1.0$ to $\tau_{\min} = 0.01$.
High temperature produces diverse DP samples and diffuse soft distributions; low temperature concentrates probability on the emerging optimum, tightening the STE approximation.
The inner loop (lines~4--8) averages gradients over $g$ independent Gumbel samples per step for variance reduction, where $g \geq 1$ is a hyperparameter (Appendix~\ref{app:quant_ablations} ablates this choice).

Setup for applying RCO to LLM compression (KL objective, weight assembly via GPTQ residuals, STE backward pass) is detailed in Appendix~\ref{app:llm_setup}.

\section{Experiments}
\label{sec:experiments}

\subsection{Controlled validation: multiple-choice knapsack}
\label{sec:mckp}

We first isolate the manifold constraint handling from the stochastic gradient estimation.
The multiple-choice knapsack problem (MCKP) admits closed-form gradients and an exact DP solution, so we can compare four constraint handling methods (manifold equality, manifold with slack variable, Lagrangian, augmented Lagrangian) on the same gradient and optimizer, varying only how they enforce the budget.
We generate 13 scenarios (3 random instances per scenario, 5000 steps each) spanning correlated costs, tight budgets, adversarial instances, under-budget optima, and a large-scale instance ($N{=}1000$, $K{=}32$).
Figure~\ref{fig:mckp} shows convergence on the largest scenario; Table~\ref{tab:mckp_full} (Appendix~\ref{app:mckp_full}) reports gap and violation across all scenarios.

\begin{figure}[!ht]
  \centering
  \includegraphics[width=\linewidth]{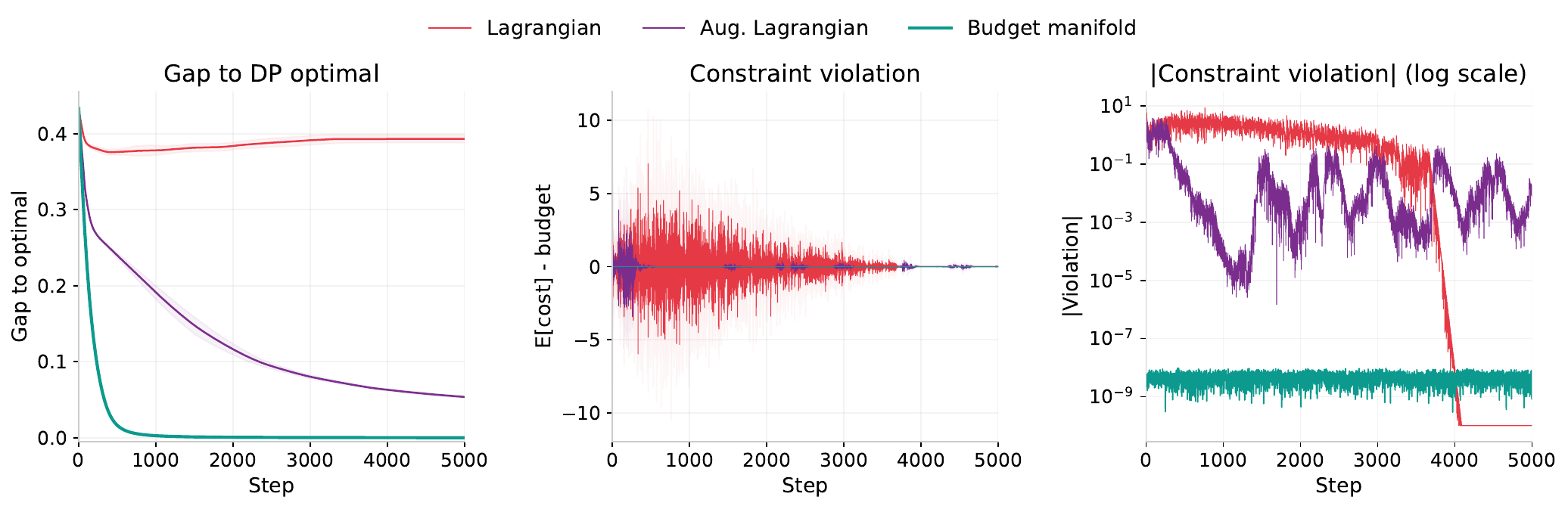}
  \caption{MCKP, \textit{huge} scenario ($N{=}1000$, $K{=}32$, three random instances).
  Left: gap to DP optimum. Center: raw constraint violation.
  Right: $|$violation$|$ on log scale; manifold ($\sim\!10^{-9}$) vs.\ Lagrangian ($\sim\!10^{-1}$).
  Additional scenarios in Appendix~\ref{app:mckp_full}.}
  \label{fig:mckp}
\end{figure}

Three patterns emerge.
First, manifold projection satisfies the budget exactly ($|C(\boldsymbol{\alpha}) - B| < 10^{-8}$) in every scenario, while Lagrangian methods maintain violations of order $10^{-1}$.
Second, at fixed compute the manifold's advantage at scale is dramatic.
On \textit{huge}, the manifold reaches within 1\% of the DP optimum in 594 steps; vanilla Lagrangian plateaus at a 37.6\% gap from step ${\sim}500$ onward and never improves; augmented Lagrangian narrows the gap monotonically but does not reach 1\% within 5000 steps (Table~\ref{tab:mckp_compute_full}, Appendix~\ref{app:mckp_full}).
Third, when the true optimum costs less than the budget (\textit{cheap optimal}, \textit{mixed slack}), equality-constrained methods waste budget on inferior options; the slack variable (Section~\ref{sec:method}, Appendix~\ref{app:slack}) recovers the DP optimum.
\subsection{MoE expert pruning}
\label{sec:llm}

We apply the manifold algorithm to MoE expert pruning, where each expert is a group with two options (keep or prune) at binary costs and the budget fixes the total number of pruned experts.
EvoESAP~\citep{liu2026evoesap} is contemporaneous work that searches over per-layer counts while holding within-layer pruning order fixed by an importance criterion (we use REAP); RCO instead optimizes per-expert logits jointly, free to deviate from REAP ordering.
We use REAP scores only to initialize the logits.

\paragraph{Setup.}
We evaluate on OLMoE-1B-7B (64 experts/layer), Qwen3-30B-A3B (128 experts/layer), and Qwen3-Coder-Next (512 experts/layer).
Baselines are REAP~\citep{lasby2025reap} (uniform budget allocation, REAP ordering within each layer) and EvoESAP~\citep{liu2026evoesap} (evolutionary search over layer budgets, REAP ordering within each layer).
RCO uses the same REAP scores only to initialize its per-expert logits; the search itself is free to deviate from the REAP ordering and to redistribute budget across layers.
All reported pruned-model numbers are evaluated as-is; no post-pruning finetuning of routers, experts, or other parameters is performed.
We report Avg, the unweighted mean of eight standard benchmarks.
All EvoESAP numbers are our own reproductions under a matched protocol (same calibration data, fitness, eval harness, and compute environment as RCO); details in Appendix~\ref{app:evoesap_details}.

\paragraph{Qwen3-30B-A3B.}
On Qwen3-30B-A3B (Figure~\ref{fig:summary}; full per-benchmark breakdown in Appendix~\ref{app:expert_pruning_full}, Table~\ref{tab:qwen3_full}), RCO recovers 96\% of the uncompressed baseline average at 25\% pruning, vs.\ EvoESAP's 90\% (71.0 vs.\ 66.5 Avg), in ${\sim}85$\,min versus 5.2\,h; at 50\% pruning RCO maintains a $+2.0$ point Avg advantage over EvoESAP at the same wall-clock ratio.

\paragraph{OLMoE-1B-7B.}
On OLMoE-1B-7B at 25\% expert sparsity, RCO at 50 steps ($\sim$10\,min) already exceeds EvoESAP (0.597 vs.\ 0.581, searched for 5.8\,h); at 300 steps it reaches 0.608 ($+$2.7 points, $5{\times}$ faster). Full per-benchmark iteration sweep in Appendix~\ref{app:expert_pruning_full} (Table~\ref{tab:olmoe_iter_full}).

\paragraph{Qwen3-Coder-Next.}
Table~\ref{tab:coder_next_main} shows expert pruning on Qwen3-Coder-Next, an 80B3A MoE model with 512 experts/layer, across calibration domain and budget allocation. Nonuniform allocation is crucial at high sparsity: at 50\%, it recovers 97\% of HumanEval versus 55\% for uniform, and at 25\% both nonuniform variants match the full model. Calibration domain induces a trade-off: coding calibration preserves code generation but hurts general knowledge, while general calibration does the opposite (Appendix~\ref{app:coder_next_full}).

\begin{table}[!ht]
  \caption{Qwen3-Coder-Next expert pruning (512~experts/layer).
  Uniform: fixed expert count per layer; nonuniform: RCO redistributes the budget.
  HE: HumanEval, MB: MBPP (pass@1); Avg: mean of eight general benchmarks.
  Bold: best per sparsity level. All values in \%.}
  \label{tab:coder_next_main}
  \centering
  \footnotesize
  \setlength{\tabcolsep}{2.5pt}
  \begin{tabular}{@{}l ll cc cccccccc c@{}}
    \toprule
    & & & \multicolumn{2}{c}{Coding} & \multicolumn{9}{c}{General} \\
    \cmidrule(lr){4-5} \cmidrule(lr){6-14}
    Sparsity & Cal.\ & Alloc.\ & HE & MB & ARC-C & ARC-E & BoolQ & HSwag & MMLU & OBQA & RTE & Wino & Avg \\
    \midrule
    0\% & \multicolumn{2}{l}{Full model} & 74.4 & 76.4 & 60.6 & 82.1 & 88.5 & 77.5 & 76.7 & 43.0 & 76.5 & 66.6 & 71.4 \\
    \midrule
    \multirow{4}{*}{25\%} & coding & uniform & 68.3 & \textbf{68.8} & 50.1 & 72.2 & 86.4 & 69.0 & 71.0 & 38.0 & 72.9 & 65.5 & 65.6 \\
    & coding & nonunif.\ & \textbf{74.4} & 67.8 & 46.2 & 66.2 & 85.1 & 66.5 & 68.0 & 36.2 & \textbf{77.6} & 64.2 & 63.8 \\
    & general & uniform & 4.3 & 4.6 & 60.0 & 80.7 & 87.6 & \textbf{78.5} & 70.4 & \textbf{45.2} & 75.1 & 67.7 & 70.7 \\
    & general & nonunif.\ & 6.1 & 5.8 & \textbf{61.8} & \textbf{82.2} & \textbf{88.2} & 77.6 & \textbf{71.2} & 44.2 & 76.2 & \textbf{69.9} & \textbf{71.4} \\
    \midrule
    \multirow{4}{*}{50\%} & coding & uniform & 40.9 & 53.4 & 40.3 & 64.1 & 78.9 & 57.8 & 56.4 & 35.0 & 67.1 & 61.6 & 57.7 \\
    & coding & nonunif.\ & \textbf{72.0} & \textbf{69.0} & 35.6 & 55.5 & 77.6 & 54.8 & 54.3 & 34.0 & 64.6 & 60.3 & 54.6 \\
    & general & uniform & 0.0 & 1.8 & \textbf{54.1} & \textbf{77.1} & 83.9 & \textbf{70.9} & \textbf{61.0} & \textbf{42.8} & \textbf{67.5} & \textbf{65.8} & \textbf{65.4} \\
    & general & nonunif.\ & 1.2 & 1.0 & 52.6 & 76.2 & \textbf{84.2} & 70.8 & 59.5 & 41.4 & \textbf{67.5} & 63.5 & 64.4 \\
    \bottomrule
  \end{tabular}
\end{table}

\subsection{Mixed-precision quantization}
\label{sec:quant}

We apply RCO to mixed-precision quantization on Qwen3-8B, assigning one of seven bitwidths (2--8) to each of 252 linear layers.
Layers are quantized via GPTQ~\citep{frantar2023gptq}; the manifold search optimizes the bitwidth assignment to minimize calibration KL divergence (Eq.~\ref{eq:kl_loss}).
Calibration uses 256 FineWeb-Edu sequences (seq\_len=2048).
Evaluation reports perplexity on two held-out corpora (FineWeb-Edu, C4).

\paragraph{Baselines.}
Table~\ref{tab:quant_main} compares three RCO configurations against EvoPress~\citep{sieberling2024evopress} (evolutionary search over actual model loss, 100 generations), IMPQ~\citep{zhao2025impq} (Shapley-based surrogate with pairwise layer interactions, solved via MILP), and dynamic HIGGS (linear surrogate, solved via DP)~\citep{malinovskii2024higgs} across four average bitwidths (2.25--4.0).
All EvoPress numbers for Qwen3-8B are our own reproductions under a matched protocol; \citet{sieberling2024evopress} do not include Qwen3-8B in their reported experiments. Details in Appendix~\ref{app:evopress_details}.

At high compression (2.25 bits), RCO reduces FW perplexity by 36\% over HIGGS (20.47 vs.\ 32.07) and by 4\% over EvoPress (20.47 vs.\ 21.40) at ${\sim}10{\times}$ lower wall-clock cost.
At 2.5 bits, RCO surpasses EvoPress (FW 15.43 vs.\ 15.64) at $\sim$6$\times$ lower wall-clock cost (117 minutes vs.\ 11--14 hours).
At 3.5--4.0 bits the problem is easy enough that even surrogate methods solve it well (HIGGS within 3\% of RCO), and RCO's advantage reduces to wall-clock time.

\begin{table*}[!ht]
  \caption{Qwen3-8B mixed-precision quantization. Perplexity ($\downarrow$) on FineWeb-Edu (FW) and C4; FP16: FW=10.96, C4=17.20.
  Same GPTQ-quantized weights per layer/bitwidth.
  \textsuperscript{$\dagger$}Binary bitwidth; Shapley surrogate, MILP.
  \textsuperscript{$\ddagger$}Linear surrogate, DP.}
  \label{tab:quant_main}
  \centering
  \small
  \begin{tabular}{@{}l cc cc cc cc r@{}}
    \toprule
    & \multicolumn{2}{c}{2.25 bits} & \multicolumn{2}{c}{2.5 bits} & \multicolumn{2}{c}{3.5 bits} & \multicolumn{2}{c}{4.0 bits} & \\
    \cmidrule(lr){2-3} \cmidrule(lr){4-5} \cmidrule(lr){6-7} \cmidrule(lr){8-9}
    Method & FW & C4 & FW & C4 & FW & C4 & FW & C4 & Wall \\
    \midrule
    RCO ($g{=}4$, $T{=}200$)  & 20.60 & \textbf{32.44} & 15.78 & 24.73 & \textbf{11.41} & \textbf{17.86} & 11.17 & 17.52 & 62\,m \\
    RCO ($g{=}32$, $T{=}50$)  & 20.66 & 32.94 & \textbf{15.43} & 24.00 & 11.46 & 18.00 & \textbf{11.14} & \textbf{17.40} & 117\,m \\
    RCO ($g{=}16$, $T{=}200$) & \textbf{20.47} & 32.45 & 15.45 & \textbf{23.79} & 11.45 & 17.91 & 11.16 & 17.46 & 215\,m \\
    \midrule
    EvoPress (100 gen)          & 21.40 & 33.92 & 15.64 & 24.63 & 11.50 & 18.07 & 11.16 & 17.58 & 11--14\,h \\
    IMPQ\textsuperscript{$\dagger$} & 24.18 & 38.68 & 18.79 & 29.84 & 12.31 & 19.09 & 11.42 & 17.76 & $\sim$2\,h \\
    HIGGS\textsuperscript{$\ddagger$} & 32.07 & 53.61 & 20.08 & 31.15 & 11.65 & 18.22 & 11.28 & 17.70 & $\sim$2\,h \\
    \bottomrule
  \end{tabular}
\end{table*}

\paragraph{Ablations.}
Appendix~\ref{app:quant_ablations} reports sweeps over nine hyperparameters.
Two findings stand out.
First, Gumbel sample count per step is the most important hyperparameter: increasing from $g{=}1$ to $g{=}4$ reduces FW perplexity by 0.85, with diminishing and non-monotonic gains through $g{=}32$.
Second, sample efficiency dominates step count: $g{=}32$ with 50 steps matches $g{=}16$ with 200 steps at half the forward passes, because the STE gradient benefits more from diverse samples than from additional iterations.

\subsection{Mixed-precision quantization for MoE}
\label{sec:moe_quant}

We compare RCO against MxMoE~\citep{duanmu2025mxmoe}, a sensitivity-driven per-layer ILP, on mixed-precision weight quantization of Qwen1.5-MoE-A2.7B (24 layers, 60 routed plus one shared expert per layer) using a shared GPTQ weight database (bitwidths 2--8).
Table~\ref{tab:moe_quant_main} reports PPL on Wikitext-2, C4, and FineWeb-Edu and per-task zero-shot accuracy on six lm-eval-harness tasks at three expert-bit budgets, with uniform-bitwidth references.
RCO and RCO* outperform MxMoE on every aggregate metric at every target despite a strictly smaller search space, and at 3.5\,bit RCO matches uniform 4-bit quality, saturating against the FP16 floor; full setup and per-target discussion are in Appendix~\ref{app:moe_quant_full}.

\begin{table}[H]
  \caption{Qwen1.5-MoE-A2.7B mixed-precision quantization at 2.5 and 3.5\,bit (3.0\,bit block in Table~\ref{tab:moe_quant_full}, Appendix~\ref{app:moe_quant_full}).
  PPL ($\downarrow$) on Wikitext-2 (W2), C4, FineWeb-Edu (FW); zero-shot accuracy ($\uparrow$, decimals) on six lm-eval-harness tasks; Avg is the unweighted mean.
  RCO*: attention locked at FP16 (matches MxMoE).
  RCO: full search, attention quantized at the same total memory.
  Bold: best per column within each bit budget.}
  \label{tab:moe_quant_main}
  \centering
  \footnotesize
  \setlength{\tabcolsep}{3pt}
  \begin{tabular}{@{}l l cccc ccccccc@{}}
    \toprule
    & & \multicolumn{4}{c}{Perplexity ($\downarrow$)} & \multicolumn{7}{c}{Zero-shot accuracy ($\uparrow$)} \\
    \cmidrule(lr){3-6} \cmidrule(lr){7-13}
    Bits & Method & W2 & C4 & FW & Avg PPL & PIQA & HSwag & ARC-E & ARC-C & Wino & LAMBADA & Avg ZS \\
    \midrule
    FP16 & full model & 6.79 & 10.05 & 9.07 & 8.64 & .805 & .773 & .690 & .445 & .695 & .713 & .687 \\
    \midrule
    \multirow{3}{*}{2.5 bit}
    & MxMoE & 7.90 & 12.44 & 10.28 & 10.21 & .786 & .746 & .659 & \textbf{.416} & .653 & .662 & .653 \\
    & RCO* & 7.47 & 11.57 & 9.81 & 9.61 & \textbf{.793} & .750 & .652 & .411 & .651 & \textbf{.677} & .656 \\
    & RCO  & \textbf{7.40} & \textbf{11.44} & \textbf{9.74} & \textbf{9.53} & .792 & \textbf{.752} & \textbf{.667} & .409 & \textbf{.665} & .676 & \textbf{.660} \\
    \midrule
    \multirow{3}{*}{3.5 bit}
    & MxMoE & 6.94 & 10.35 & 9.23 & 8.84 & \textbf{.801} & .768 & .659 & .433 & \textbf{.686} & .697 & .674 \\
    & RCO* & 6.90 & 10.25 & 9.18 & 8.77 & .799 & .770 & \textbf{.687} & .445 & .673 & .705 & .680 \\
    & RCO  & \textbf{6.88} & \textbf{10.23} & \textbf{9.17} & \textbf{8.76} & \textbf{.801} & \textbf{.771} & .686 & \textbf{.453} & .684 & \textbf{.706} & \textbf{.683} \\
    \midrule
    \multirow{1}{*}{4 bit} 
    & uniform & 6.91 & 10.24 & 9.20 & 8.78 & .809 & .771 & .679 & .440 & .681 & .702 & .680 \\
    \bottomrule
  \end{tabular}
\end{table}
\clearpage
\section{Related Work}
\label{sec:related}

Budget constraints in ML are predominantly enforced via penalty or Lagrangian methods.
In neural architecture search, ProxylessNAS~\citep{cai2019proxyless}, FBNet~\citep{wu2019fbnet}, and SDQ~\citep{huang2022sdq} add expected-cost penalties to the objective; MnasNet~\citep{tan2019mnasnet} and HAQ~\citep{wang2019haq} use RL reward shaping; in constrained RL, primal-dual methods update Lagrange multipliers alongside policy parameters~\citep{achiam2017constrained, tessler2019reward, stooke2020responsive}.
All satisfy constraints only approximately and require tuning a multiplier or schedule.
RC-DARTS~\citep{jin2020rcdarts} shares the constraint form but enforces it through a decaying Lagrangian penalty, without identifying the manifold structure.
HardCoRe-NAS~\citep{nayman2021hardcore} is a notable exception, enforcing a hard latency constraint via block-coordinate Frank-Wolfe on a linearized convex polytope, but this restricts the optimizer to Frank-Wolfe updates and does not transport optimizer state across steps.
Riemannian optimization restricts iterates to a smooth manifold to enforce constraints exactly~\citep{absil2008optimization, boumal2023introduction}, with efficient algorithms for Stiefel, Grassmann, SPD, and hyperbolic manifolds~\citep{bonnabel2013stochastic, becigneul2019riemannian, zhang2016first, nickel2017poincare}, and for doubly stochastic matrices via Sinkhorn retraction~\citep{douik2019manifold}.
The probability simplex and its m-flat submanifolds are classical in information geometry~\citep{amari2000methods, lebanon2005riemannian}, typically equipped with the Fisher metric; our Euclidean ambient metric (Section~\ref{sec:method}) is what makes the operations cheap.
The closest prior work, \citet{douik2019manifold}, addresses linear constraints via iterative Sinkhorn retraction; the budget manifold instead admits scalar monotone retraction (Proposition~\ref{prop:retraction}).
To our knowledge, this specific Riemannian manifold has not previously been used as an optimization target in the context we consider.

In non-uniform model compression, sensitivity methods~\citep{dong2019hawq, dong2020hawqv2, yao2021hawqv3, yin2024owl, li2023llmmq} score layers independently and allocate via DP or ILP, assuming the loss decomposes; HIGGS~\citep{malinovskii2024higgs} formalizes this via a per-layer MSE-to-perplexity linearity theorem, and IMPQ~\citep{zhao2025impq} extends the surrogate with pairwise Shapley interactions.
\citet{sieberling2024evopress} showed the decomposability assumption fails at high compression and proposed evolutionary search over actual model loss; \citet{liu2026evoesap} (EvoESAP) extends this to MoE expert pruning. Both are zero-order and require hundreds of full model evaluations; we compare against EvoPress (Section~\ref{sec:quant}) and EvoESAP (Section~\ref{sec:llm}).
For MoE mixed-precision quantization, MxMoE~\citep{duanmu2025mxmoe} computes per-(expert, weight-block) sensitivities and allocates via a per-layer ILP, again committing to a decomposable surrogate before observing model loss; we compare against MxMoE on Qwen1.5-MoE-A2.7B in Section~\ref{sec:moe_quant}.
Orthogonally, Gumbel-softmax~\citep{jang2017categorical, maddison2017concrete} and the straight-through estimator~\citep{bengio2013estimating} enable gradient flow through discrete samples, and differentiable combinatorial solvers~\citep{berthet2020learning, vlastelica2020differentiation} extend this to structured problems; these address differentiability of discrete choices, not budget enforcement.

Prior analyses of STE bias characterize it via Taylor expansion in the categorical case~\citep{liu2023bridging} or zero-temperature analysis of the Gumbel-Softmax family~\citep{shekhovtsov2023cold}; convergence guarantees for STE training are limited to two-layer networks with binary activations~\citep{yin2019understanding}.
None of these address the projected bias relevant under manifold constraints.

\section{Conclusion and Limitations}
\label{sec:conclusion}

The level set of expected cost under softmax parameterization is a well-behaved Riemannian submanifold: normals are available in closed form, retraction reduces to binary search on a monotone scalar function, and vector transport is a single inner product. RCO wraps these three operations around Adam and enforces the budget exactly at every iterate, with no constraint-related hyperparameters. On synthetic knapsack, where decomposability isolates the manifold from the Gumbel-STE, RCO recovers DP-optimal solutions that Lagrangian methods miss by a wide margin. On LLM compression, where the loss is non-decomposable, RCO matches or exceeds evolutionary search at a fraction of the wall-clock cost.

\textbf{Limitations.}
The forward-pass DP runs in $O(NKB')$ time, negligible against LLM forward passes but a bottleneck for very large option sets. STE bias makes full-algorithm convergence empirical rather than provable. Nonlinear costs (e.g., inference latency) lose retraction monotonicity, requiring iterative root-finding.

\clearpage
\bibliographystyle{plainnat}

\clearpage
\appendix
\section{Proofs and Theoretical Details}
\label{app:proofs}

This appendix provides formal statements and proofs for results referenced in Section~\ref{sec:method}, verifies that the algorithm components satisfy standard Riemannian optimization axioms, and derives the extensions to inequality and multiple constraints.

\subsection{Manifold structure and normal vector}
\label{app:manifold_proofs}

\begin{proof}[Proof of Proposition~\ref{prop:manifold}]
  The expected cost $C(\boldsymbol{\alpha}) = \sum_i w_i \inner{\softmax(\boldsymbol{\alpha}_i)}{\mathbf{c}}$ is a composition of the smooth $\softmax$ with a linear function, hence smooth.
  By Proposition~\ref{prop:normal}, the $(i,k)$ entry of $\nabla C(\boldsymbol{\alpha})$ is $w_i\, p_{ik}(c_k - \E_{p_i}[c])$.
  Since $\softmax$ produces strictly positive probabilities ($p_{ik} > 0$ for all $i,k$),
  \[
    \norm{\nabla C}^2
    = \sum_{i=1}^N w_i^2 \sum_{k=1}^K p_{ik}^2 \bigl(c_k - \E_{p_i}[c]\bigr)^2.
  \]
  If the costs are not all equal, then for every group $i$, $\E_{p_i}[c]$ is a strict convex combination of distinct values, so at least one $c_k \neq \E_{p_i}[c]$, and $p_{ik} > 0$ gives $\norm{\nabla C}^2 > 0$.
  The regular value theorem \citep[Theorem~3.2]{boumal2023introduction} then implies that $\M$ is a smooth $(NK{-}1)$-dimensional submanifold.
\end{proof}

\begin{proof}[Proof of Proposition~\ref{prop:normal}]
  The softmax Jacobian is $\partial p_{ij}/\partial \alpha_{ik} = p_{ij}(\delta_{jk} - p_{ik})$.
  Differentiating~\eqref{eq:cost}:
  \begin{align*}
    \frac{\partial C}{\partial \alpha_{ik}}
    &= w_i \sum_{j=1}^K c_j \, p_{ij}(\delta_{jk} - p_{ik}) \\
    &= w_i \bigl( c_k\, p_{ik} - p_{ik} \textstyle\sum_j c_j\, p_{ij} \bigr) \\
    &= w_i\, p_{ik}\bigl(c_k - \E_{p_i}[c]\bigr). \qedhere
  \end{align*}
\end{proof}

\subsection{Monotonic retraction}
\label{app:retraction_proof}

\begin{proof}[Proof of Proposition~\ref{prop:retraction}]
  Let $\tilde{c}_{ik} = c_k$.
  By the chain rule and Proposition~\ref{prop:normal}:
  \begin{align*}
    \frac{d}{dt}\, C(\boldsymbol{\alpha} + t\tilde{\mathbf{c}})
    &= \sum_{i,k} w_i\, p_{ik}(t)\bigl(c_k - \E_{p_i(t)}[c]\bigr)\, c_k \\
    &= \sum_{i=1}^N w_i \Bigl(\sum_k p_{ik}(t)\, c_k^2 - \bigl(\E_{p_i(t)}[c]\bigr)^2\Bigr) \\
    &= \sum_{i=1}^N w_i\, \Var_{p_i(t)}[c].
  \end{align*}
  Since $w_i > 0$ and $p_{ik}(t) > 0$ for all $k$, $\Var_{p_i(t)}[c] > 0$ whenever costs are not all equal.
  Moreover, as $t \to +\infty$ the softmax concentrates on $\arg\max_k c_k$ giving $C \to \sum_i w_i \max_k c_k$, and as $t \to -\infty$, $C \to \sum_i w_i \min_k c_k$.
  Strict monotonicity and these limiting values ensure that $C(\boldsymbol{\alpha} + t\tilde{\mathbf{c}}) = B$ has a unique solution for any $B$ in the feasible range, found by binary search with exponential convergence in the number of iterations.
\end{proof}

\paragraph{Empirical retraction cost.}
With shift bracket $t \in [-50, 50]$ (range $R = 100$) and tolerance $\varepsilon = 10^{-8}$ on the cost residual, the binary search converges in $\lceil \log_2(R/\varepsilon) \rceil = 34$ iterations in $t$-space.
The cost residual is enforced in expected-cost space, where the local slope $\sum_i w_i \Var_{p_i}[c]$ varies across iterates, so the empirical iteration count differs slightly from this theoretical bound.
Figure~\ref{fig:retraction_iters} reports the distribution across five MCKP scenarios (huge, large, medium, correlated tight, adversarial; $1.8$k--$5.0$k retraction calls each).
Per-call mean ranges from $29.4$ on \textit{adversarial} to $40.8$ on \textit{huge}; the maximum across all scenarios and calls is $45$.

\begin{figure}[H]
  \centering
  \includegraphics[width=0.85\linewidth]{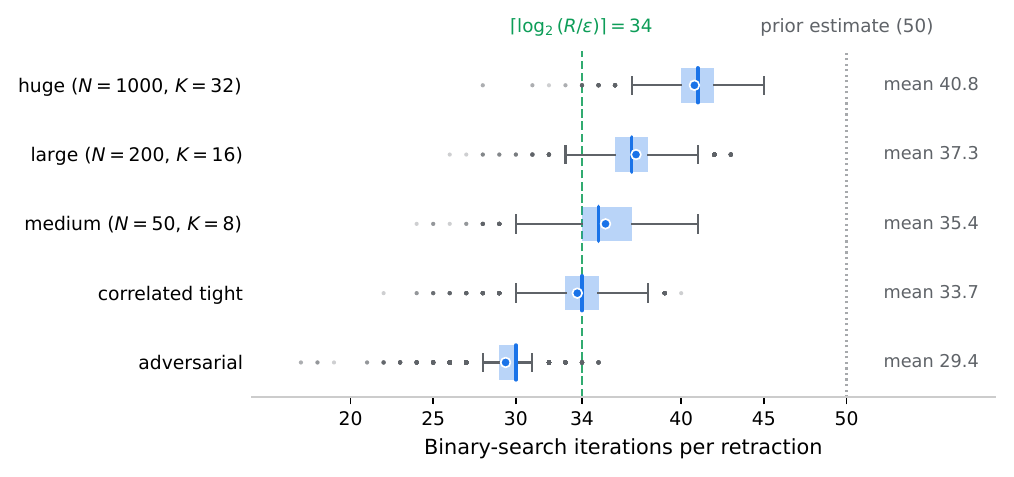}
  \caption{Binary-search iterations per retraction across five MCKP scenarios (boxes: IQR; whiskers: 1.5\,IQR; points: outliers).
  Dashed green line: theoretical bound $\lceil \log_2(R/\varepsilon) \rceil = 34$ for the configuration used in our experiments ($R = 100$, $\varepsilon = 10^{-8}$).
  Dotted gray line: prior loose estimate of $50$.
  All scenarios concentrate near the theoretical bound and stay below $45$ in the worst case.}
  \label{fig:retraction_iters}
\end{figure}

\subsection{Verification of retraction axioms}
\label{app:retraction_axioms}

A retraction on a manifold $\M$ is a smooth map $R \colon T\M \to \M$ satisfying two axioms \citep[Definition~4.1]{absil2008optimization}: (i) $R_{\boldsymbol{\alpha}}(\mathbf{0}) = \boldsymbol{\alpha}$ (centering), and (ii) $\frac{d}{dt}R_{\boldsymbol{\alpha}}(t\boldsymbol{\xi})\big|_{t=0} = \boldsymbol{\xi}$ for all $\boldsymbol{\xi} \in T_{\boldsymbol{\alpha}}\M$ (local rigidity).

Our retraction takes a tangent vector $\boldsymbol{\xi} \in T_{\boldsymbol{\alpha}}\M$, forms $\boldsymbol{\alpha}' = \boldsymbol{\alpha} + \boldsymbol{\xi}$, and finds $t^*$ such that $C(\boldsymbol{\alpha}' + t^*\tilde{\mathbf{c}}) = B$.
Smoothness of $t^*$ as a function of $\boldsymbol{\xi}$ follows from the implicit function theorem applied to
\[
  F(\boldsymbol{\xi}, t) = C(\boldsymbol{\alpha} + \boldsymbol{\xi} + t\tilde{\mathbf{c}}) - B.
\]
The partial derivative $\partial F / \partial t = \sum_i w_i\, \Var_{p_i}[c] > 0$ (Proposition~\ref{prop:retraction}) is nonzero everywhere, so $t^*(\boldsymbol{\xi})$ is smooth in a neighborhood of any point on~$\M$.

\begin{proposition}[Retraction axioms]
  \label{prop:retraction_axioms}
  The map $R_{\boldsymbol{\alpha}}(\boldsymbol{\xi}) = \boldsymbol{\alpha} + \boldsymbol{\xi} + t^*(\boldsymbol{\xi})\,\tilde{\mathbf{c}}$, where $t^*$ solves $C(\boldsymbol{\alpha} + \boldsymbol{\xi} + t^*\tilde{\mathbf{c}}) = B$, satisfies both retraction axioms.
\end{proposition}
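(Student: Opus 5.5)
To prove Proposition~\ref{prop:retraction_axioms}, I will check the two axioms in order. In both, $t^*$ is the implicit function of $F(\boldsymbol{\xi},t) = C(\boldsymbol{\alpha}+\boldsymbol{\xi}+t\tilde{\mathbf{c}}) - B$ described just above. It is well defined and unique by the strict monotonicity and limits in Proposition~\ref{prop:retraction}, which applies for every $\boldsymbol{\xi}$ since $B$ lies in the feasible range. It is smooth by the implicit function theorem, because $\partial F/\partial t = \sum_i w_i \Var_{p_i}[c] > 0$.

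\textbf{Centering.} Take $\boldsymbol{\xi} = \mathbf{0}$. Since $\boldsymbol{\alpha} \in \M$, $F(\mathbf{0},0) = C(\boldsymbol{\alpha}) - B = 0$, so $t=0$ is a root. Uniqueness of the root then gives $t^*(\mathbf{0}) = 0$, and hence $R_{\boldsymbol{\alpha}}(\mathbf{0}) = \boldsymbol{\alpha}$.

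\textbf{Local rigidity.} Fix $\boldsymbol{\xi} \in T_{\boldsymbol{\alpha}}\M$, which means $\inner{\nabla C(\boldsymbol{\alpha})}{\boldsymbol{\xi}} = 0$ by Proposition~\ref{prop:normal}. Set $\tau(s) = t^*(s\boldsymbol{\xi})$. By smoothness, $\frac{d}{ds}R_{\boldsymbol{\alpha}}(s\boldsymbol{\xi})\big|_{s=0} = \boldsymbol{\xi} + \tau'(0)\,\tilde{\mathbf{c}}$, so it suffices to show $\tau'(0) = 0$. Differentiate the identity $C(\boldsymbol{\alpha} + s\boldsymbol{\xi} + \tau(s)\tilde{\mathbf{c}}) = B$ at $s=0$, using $\tau(0)=0$ from centering:
\[
  \inner{\nabla C(\boldsymbol{\alpha})}{\boldsymbol{\xi}} + \tau'(0)\,\inner{\nabla C(\boldsymbol{\alpha})}{\tilde{\mathbf{c}}} = 0 .
\]
The first term vanishes by tangency. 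The coefficient of $\tau'(0)$ equals $\sum_i w_i \Var_{p_i}[c] > 0$, which is exactly the computation in the proof of Proposition~\ref{prop:retraction}. Therefore $\tau'(0) = 0$, and the derivative equals $\boldsymbol{\xi}$.

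\textbf{Main obstacle.} The only delicate point is smoothness of $R$ as a map on $T\M$, not just near $\boldsymbol{\xi} = \mathbf{0}$. I would handle it by applying the implicit function theorem to $G(\boldsymbol{\alpha},\boldsymbol{\xi},t) = C(\boldsymbol{\alpha}+\boldsymbol{\xi}+t\tilde{\mathbf{c}}) - B$ jointly in $(\boldsymbol{\alpha},\boldsymbol{\xi})$. The partial derivative in $t$ is positive everywhere, so the local smooth solutions exist around every point. Global uniqueness of the root then forces these local solutions to patch together into a single smooth function on all of $T\M$.
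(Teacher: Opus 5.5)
Your proof is correct and follows essentially the same route as the paper: centering holds because $C(\boldsymbol{\alpha}) = B$ forces $t^* = 0$, and local rigidity follows by differentiating the constraint along $s\boldsymbol{\xi}$, then using tangency together with $\inner{\nabla C(\boldsymbol{\alpha})}{\tilde{\mathbf{c}}} = \sum_i w_i \Var_{p_i}[c] > 0$ to get $\tau'(0) = 0$. You also invoke global uniqueness of the root explicitly, both for centering and for patching the local implicit-function-theorem solutions into a smooth map on all of $T\M$, which makes rigorous two points the paper only asserts locally.
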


\begin{proof}
  \textit{Centering.}
  When $\boldsymbol{\xi} = \mathbf{0}$, $C(\boldsymbol{\alpha}) = B$ already holds, so $t^* = 0$ and $R_{\boldsymbol{\alpha}}(\mathbf{0}) = \boldsymbol{\alpha}$.

  \medskip
  \textit{Local rigidity.}
  Write $R_{\boldsymbol{\alpha}}(t\boldsymbol{\xi}) = \boldsymbol{\alpha} + t\boldsymbol{\xi} + t^*(t\boldsymbol{\xi})\,\tilde{\mathbf{c}}$.
  Differentiating the constraint
  \[
    C\bigl(\boldsymbol{\alpha} + t\boldsymbol{\xi} + t^*(t\boldsymbol{\xi})\,\tilde{\mathbf{c}}\bigr) = B
  \]
  at $t = 0$ gives
  \[
    \inner{\nabla C(\boldsymbol{\alpha})}{\boldsymbol{\xi}}
    + \frac{dt^*}{dt}\Big|_{t=0} \inner{\nabla C(\boldsymbol{\alpha})}{\tilde{\mathbf{c}}}
    = 0.
  \]
  The first term vanishes because $\boldsymbol{\xi} \in T_{\boldsymbol{\alpha}}\M$, and $\inner{\nabla C}{\tilde{\mathbf{c}}} \neq 0$ by Proposition~\ref{prop:retraction}, so $\frac{dt^*}{dt}\big|_{t=0} = 0$.
  Therefore $\frac{d}{dt}\big|_{t=0} R_{\boldsymbol{\alpha}}(t\boldsymbol{\xi}) = \boldsymbol{\xi}$.
\end{proof}

\subsection{Riemannian gradient}
\label{app:riemannian_gradient}

The Riemannian gradient of a function $f \colon \M \to \R$ at $\boldsymbol{\alpha} \in \M$ is the unique tangent vector $\mathrm{grad}\, f(\boldsymbol{\alpha}) \in T_{\boldsymbol{\alpha}}\M$ satisfying $\inner{\mathrm{grad}\, f}{\boldsymbol{\xi}} = Df(\boldsymbol{\alpha})[\boldsymbol{\xi}]$ for all $\boldsymbol{\xi} \in T_{\boldsymbol{\alpha}}\M$ \citep[Proposition~3.61]{boumal2023introduction}.
We verify that the tangent projection used in Algorithm~\ref{alg:rco} recovers it.

\begin{proposition}[Projected gradient is the Riemannian gradient]
  \label{prop:riemannian_grad}
  Let $L \colon \R^{NK} \to \R$ be smooth and let $\mathbf{n} = \nabla C(\boldsymbol{\alpha})$.
  Then
  \[
    \mathrm{grad}\, L\big|_{\M}(\boldsymbol{\alpha})
    = \nabla L(\boldsymbol{\alpha})
    - \frac{\inner{\nabla L(\boldsymbol{\alpha})}{\mathbf{n}}}{\norm{\mathbf{n}}^2}\;\mathbf{n}.
  \]
\end{proposition}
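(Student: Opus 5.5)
We verify directly that the projected vector satisfies the two properties characterizing the Riemannian gradient in \citet[Proposition~3.61]{boumal2023introduction}: it lies in $T_{\boldsymbol{\alpha}}\M$, and its inner product with every tangent vector equals the directional derivative of $L$. Uniqueness then identifies it with $\mathrm{grad}\, L|_{\M}(\boldsymbol{\alpha})$. Throughout, $\M$ carries the metric induced from the Euclidean ambient inner product on $\R^{NK}$. This is the setting in which Proposition~\ref{prop:manifold} realizes $\M$ as an embedded Riemannian submanifold.

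\textbf{Step 1: the tangent space.} Since $\M = C^{-1}(B)$ and $\nabla C \neq \mathbf{0}$ everywhere (Proposition~\ref{prop:manifold}), the regular value theorem gives
\[
T_{\boldsymbol{\alpha}}\M = \ker DC(\boldsymbol{\alpha}) = \{\boldsymbol{\xi} : \inner{\mathbf{n}}{\boldsymbol{\xi}} = 0\}, \qquad \mathbf{n} = \nabla C(\boldsymbol{\alpha}).
\]
So the tangent space is the hyperplane orthogonal to $\mathbf{n}$, and $\norm{\mathbf{n}}^2 > 0$ makes the stated formula well defined.

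\textbf{Step 2: the projection is tangent.} Set
\[
\mathbf{g}_{\mathrm{tan}} = \nabla L - \frac{\inner{\nabla L}{\mathbf{n}}}{\norm{\mathbf{n}}^2}\,\mathbf{n}.
\]
Taking the inner product with $\mathbf{n}$ gives $\inner{\nabla L}{\mathbf{n}} - \inner{\nabla L}{\mathbf{n}} = 0$, so $\mathbf{g}_{\mathrm{tan}} \in T_{\boldsymbol{\alpha}}\M$.

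\textbf{Step 3: the defining identity.} Let $\boldsymbol{\xi} \in T_{\boldsymbol{\alpha}}\M$. Then
\[
\inner{\mathbf{g}_{\mathrm{tan}}}{\boldsymbol{\xi}} = \inner{\nabla L}{\boldsymbol{\xi}} - \frac{\inner{\nabla L}{\mathbf{n}}}{\norm{\mathbf{n}}^2}\,\inner{\mathbf{n}}{\boldsymbol{\xi}} = \inner{\nabla L}{\boldsymbol{\xi}},
\]
because $\inner{\mathbf{n}}{\boldsymbol{\xi}} = 0$. The right-hand side equals $DL(\boldsymbol{\alpha})[\boldsymbol{\xi}]$. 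Since $L|_{\M}$ is the restriction of a smooth ambient function, $D(L|_{\M})(\boldsymbol{\alpha})[\boldsymbol{\xi}] = DL(\boldsymbol{\alpha})[\boldsymbol{\xi}]$ for tangent $\boldsymbol{\xi}$. This can be checked along any smooth curve in $\M$ through $\boldsymbol{\alpha}$ with velocity $\boldsymbol{\xi}$.

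\textbf{Step 4: uniqueness.} If two tangent vectors satisfy the identity of Step~3, their difference is tangent and orthogonal to all of $T_{\boldsymbol{\alpha}}\M$, hence zero. Therefore $\mathbf{g}_{\mathrm{tan}} = \mathrm{grad}\, L|_{\M}(\boldsymbol{\alpha})$.

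The only step with any real content is Step~1, the identification of $T_{\boldsymbol{\alpha}}\M$ with $\mathbf{n}^\perp$. It rests entirely on the nonvanishing normal from Proposition~\ref{prop:manifold}, and everything after it is linear algebra.
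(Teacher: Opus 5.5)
Your proof is correct and follows the paper's route: identify $T_{\boldsymbol{\alpha}}\M$ with the hyperplane orthogonal to $\mathbf{n}$, then note that under the induced Euclidean metric the Riemannian gradient is the orthogonal projection of $\nabla L(\boldsymbol{\alpha})$ onto that hyperplane. The only difference is that the paper cites this projection fact, while you check it directly from the defining characterization of the Riemannian gradient (tangency, the identity $\inner{\mathrm{grad}\, L}{\boldsymbol{\xi}} = DL(\boldsymbol{\alpha})[\boldsymbol{\xi}]$, and uniqueness), which makes your argument more self-contained without changing the approach.
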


\begin{proof}
  The tangent space is $T_{\boldsymbol{\alpha}}\M = \{\boldsymbol{\xi} : \inner{\mathbf{n}}{\boldsymbol{\xi}} = 0\}$.
  Since $\M$ inherits the Euclidean metric, the Riemannian gradient equals the orthogonal projection of $\nabla L$ onto $T_{\boldsymbol{\alpha}}\M$.
  The normal space is $\mathrm{span}(\mathbf{n})$, giving the stated formula.
\end{proof}

This confirms that every projected-gradient step in Algorithm~\ref{alg:rco} moves in the steepest-descent direction on $\M$.
Note that this differs from projected gradient descent (PGD), which projects the \emph{iterate} onto a convex constraint set after each step.
Here we project the \emph{gradient} onto the tangent space and use a separate retraction to bring the iterate back to $\M$.
This is necessary because $\M$ is not convex, so projecting iterates onto it is not well-defined in general.

\subsection{Vector transport by projection}
\label{app:transport}

Exact parallel transport on implicitly defined submanifolds is expensive to compute. We instead use \emph{vector transport by projection} \citep[Section~8.1.3]{absil2008optimization}: given $\boldsymbol{\xi} \in T_{\boldsymbol{\alpha}}\M$ and a retraction step to $\boldsymbol{\alpha}'$, the transported vector is
\[
  \mathcal{T}_{\boldsymbol{\alpha} \to \boldsymbol{\alpha}'}(\boldsymbol{\xi})
  = \boldsymbol{\xi}
  - \frac{\inner{\boldsymbol{\xi}}{\mathbf{n}'}}{\norm{\mathbf{n}'}^2}\;\mathbf{n}',
\]
where $\mathbf{n}' = \nabla C(\boldsymbol{\alpha}')$.
This satisfies the vector transport axioms \citep[Definition~8.1.1]{absil2008optimization} and coincides with parallel transport to first order in the step size.

In Algorithm~\ref{alg:rco}, we apply this transport to Adam's first moment $\mathbf{m}$. We only transport the first moment; the second moment is a scalar scaling factor per coordinate and does not need transport.

\subsection{Robustness to gradient bias}
\label{app:bias}

The convergence analysis of Riemannian gradient descent (Section~\ref{sec:method}) assumes exact gradients.
In practice, Algorithm~\ref{alg:rco} uses biased gradient estimates from the straight-through estimator.
The following result shows that the tangent projection removes the normal component of any bias before it reaches the optimizer, eliminating the first-order constraint violation that unprojected bias would cause; the remaining per-step violation is second-order, scaling with manifold curvature times $\norm{P_T(\tilde{\mathbf{g}})}^2$ (which still depends on the tangential bias) and is corrected exactly by the retraction at every step.

\begin{proposition}[Constraint robustness under biased gradients]
  \label{prop:bias}
  Let $\tilde{\mathbf{g}} = \nabla L(\boldsymbol{\alpha}) + \mathbf{b}$ be a biased gradient estimate at $\boldsymbol{\alpha} \in \M$, where $\mathbf{b}$ is an arbitrary bias vector.
  Let $\mathbf{n} = \nabla C(\boldsymbol{\alpha})$ denote the constraint normal and write $P_T(\mathbf{v}) = \mathbf{v} - (\inner{\mathbf{v}}{\mathbf{n}} / \norm{\mathbf{n}}^2)\,\mathbf{n}$ for the tangent projection.
  \begin{enumerate}
    \item[\textnormal{(i)}] \textbf{Normal bias elimination.}
    The tangent projection eliminates the constraint-normal component of the bias:
    \[
      P_T(\tilde{\mathbf{g}}) = P_T(\nabla L) + \mathbf{b}_T,
    \]
    where $\mathbf{b}_T = \mathbf{b} - \frac{\inner{\mathbf{b}}{\mathbf{n}}}{\norm{\mathbf{n}}^2}\,\mathbf{n}$ is the tangential component of the bias.
    The normal component $\mathbf{b}_n = \mathbf{b} - \mathbf{b}_T$ does not enter the optimizer.

    \item[\textnormal{(ii)}] \textbf{Constraint violation bound (gradient descent).}
    For a gradient step $\boldsymbol{\alpha}' = \boldsymbol{\alpha} - \eta\, P_T(\tilde{\mathbf{g}})$, the per-step constraint violation satisfies
    \[
      |C(\boldsymbol{\alpha}') - B|
      \;\leq\; \frac{\eta^2}{2}\, \norm{\nabla^2 C(\boldsymbol{\alpha})}_{\mathrm{op}}\, \norm{P_T(\tilde{\mathbf{g}})}^2 + O(\eta^3).
    \]
    This bound is second-order in $\eta$ and independent of $\norm{\mathbf{b}_n}$.
    Without projection, the violation is first-order: $|C(\boldsymbol{\alpha} - \eta\,\tilde{\mathbf{g}}) - B| = \eta\,|\inner{\mathbf{n}}{\tilde{\mathbf{g}}}| + O(\eta^2)$.

    \item[\textnormal{(iii)}] \textbf{Momentum insulation (Adam).}
    In Algorithm~\ref{alg:rco}, after tangent projection (line~10) and vector transport (line~14), Adam's first moment satisfies $\inner{\mathbf{m}_t}{\mathbf{n}_t} = 0$ for all $t \geq 0$, regardless of the bias in the gradient estimates.
  \end{enumerate}
\end{proposition}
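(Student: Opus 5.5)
Part (i) follows from linearity of the projection $P_T$, which is the orthogonal projector $I - \mathbf{n}\mathbf{n}^\top/\norm{\mathbf{n}}^2$ and is well defined because $\mathbf{n}\neq\mathbf{0}$ by Proposition~\ref{prop:manifold}. Applying it to $\tilde{\mathbf{g}} = \nabla L + \mathbf{b}$ gives $P_T(\tilde{\mathbf{g}}) = P_T(\nabla L) + P_T(\mathbf{b})$, and $P_T(\mathbf{b}) = \mathbf{b}_T$ by definition. Since $P_T(\mathbf{n}) = \mathbf{0}$, we also get $P_T(\mathbf{b}_n)=\mathbf{0}$, because $\mathbf{b}_n$ is a scalar multiple of $\mathbf{n}$. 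So the normal part of the bias contributes nothing to the vector handed to the optimizer.

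For (ii), I would Taylor-expand $C$ around $\boldsymbol{\alpha}\in\M$, which is legitimate since $C$ is smooth (Proposition~\ref{prop:manifold}). Set $\mathbf{d} = P_T(\tilde{\mathbf{g}})$. Then
\[
C(\boldsymbol{\alpha}-\eta\mathbf{d}) = B - \eta\inner{\mathbf{n}}{\mathbf{d}} + \tfrac{\eta^2}{2}\,\mathbf{d}^\top \nabla^2 C(\boldsymbol{\alpha})\,\mathbf{d} + O(\eta^3).
\]
The linear term vanishes because $\mathbf{d}$ lies in the tangent space. The quadratic term is bounded by $\norm{\nabla^2 C}_{\mathrm{op}}\norm{\mathbf{d}}^2$, which gives the stated bound. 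By part (i), $\mathbf{d}$ does not depend on $\mathbf{b}_n$, so the bound does not either. For the unprojected step, the same expansion applied to $\tilde{\mathbf{g}}$ leaves the linear term $-\eta\inner{\mathbf{n}}{\tilde{\mathbf{g}}}$, and taking absolute values gives the first-order statement. The remainder is $O(\eta^3)$ with a constant depending on third derivatives of $C$ along the segment. These derivatives are bounded because softmax derivatives are polynomials in probabilities lying in $[0,1]$, and $\norm{\mathbf{d}}$ is fixed at this point.

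For (iii), I would argue by induction on $t$, with $\mathbf{n}_t$ the normal at the retracted iterate $\boldsymbol{\alpha}_t$. The base case is $\mathbf{m}_0=\mathbf{0}$. For the inductive step, line~14 sets $\mathbf{m}_t$ to $P_{T,t}(\cdot)$ of whatever the Adam update produced, where $P_{T,t}$ projects orthogonally to $\mathbf{n}' = \mathbf{n}_t = \nabla C(\boldsymbol{\alpha}_t)$. The range of this projector is orthogonal to $\mathbf{n}_t$, so $\inner{\mathbf{m}_t}{\mathbf{n}_t}=0$ holds identically, whatever the gradient estimate was. This means the induction barely needs its hypothesis. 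The hypothesis, together with line~10, only explains why the pre-transport momentum $\beta_1\mathbf{m}_{t-1}+(1-\beta_1)\mathbf{g}_t$ is already tangent at the previous iterate. That observation is worth including, but it is not needed for the claimed identity.

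The one point needing care is matching indices with the algorithm: the claim concerns $\mathbf{m}$ after line~14 and the normal at the post-retraction iterate. I would state that convention explicitly, and the rest is routine. The expected obstacle is mild: justifying uniformity of the $O(\eta^3)$ term in (ii). I would handle it via the Lagrange form of the remainder with $\sup\norm{\nabla^3 C}$, which is finite because $w_i$ and $c_k$ are bounded.
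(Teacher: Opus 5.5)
Your proposal is correct and follows the paper's proof essentially step for step. The three parts go through the same way:
\begin{itemize}
\item (i) uses linearity of $P_T$;
\item (ii) uses a second-order Taylor expansion in which the tangent first-order term vanishes and the Hessian is bounded because its entries are polynomials in the softmax probabilities;
\item (iii) uses induction through Adam's momentum update and the line-14 transport.
\end{itemize}
Your remark on (iii) is accurate: the identity already follows from the final transport projection alone. The paper's induction also ends ``by construction of the projection,'' so the inductive hypothesis only serves to show that the pre-transport momentum is tangent. Your extra care in bounding the $O(\eta^3)$ remainder uniformly, via bounded third derivatives of $C$, is a valid refinement that the paper leaves implicit.
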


\begin{proof}
  \textit{Part~(i).}
  $P_T$ is linear (it is an orthogonal projection onto the hyperplane $\{\mathbf{v} : \inner{\mathbf{v}}{\mathbf{n}} = 0\}$), so $P_T(\tilde{\mathbf{g}}) = P_T(\nabla L) + P_T(\mathbf{b})$.
  Since $P_T(\mathbf{b}) = \mathbf{b} - \frac{\inner{\mathbf{b}}{\mathbf{n}}}{\norm{\mathbf{n}}^2}\,\mathbf{n} = \mathbf{b}_T$, the normal component $\mathbf{b}_n$ is removed.

  \medskip
  \textit{Part~(ii).}
  Let $\mathbf{d} = P_T(\tilde{\mathbf{g}})$.
  Taylor-expanding $C$ at $\boldsymbol{\alpha} \in \M$:
  \[
    C(\boldsymbol{\alpha} - \eta\,\mathbf{d})
    = \underbrace{C(\boldsymbol{\alpha})}_{=\,B}
    - \eta\,\underbrace{\inner{\nabla C(\boldsymbol{\alpha})}{\mathbf{d}}}_{=\,\inner{\mathbf{n}}{\mathbf{d}}\,=\,0}
    + \frac{\eta^2}{2}\,\mathbf{d}^\top \nabla^2 C(\boldsymbol{\alpha})\,\mathbf{d}
    + O(\eta^3).
  \]
  The first-order term vanishes because $\mathbf{d} \in T_{\boldsymbol{\alpha}}\M$.
  The Hessian $\nabla^2 C$ exists by smoothness of $C$ (Proposition~\ref{prop:manifold}); its operator norm is bounded because every entry is a polynomial in the softmax probabilities $p_{ik} \in [0,1]$ with coefficients depending only on $w_i$ and $c_k$, giving the stated bound.
  Without projection, $\mathbf{d} = \tilde{\mathbf{g}}$ and $\inner{\mathbf{n}}{\tilde{\mathbf{g}}} \neq 0$ in general, so the violation is $O(\eta)$.

  \medskip
  \textit{Part~(iii).}
  By induction on $t$.
  At each step, the momentum passes through two stages: the Adam update (which mixes old momentum with the new projected gradient) and the vector transport (which re-projects onto the tangent space at the retracted point).
  We write $\mathbf{m}_t^+$ for the momentum after the Adam update and $\mathbf{m}_t^{\mathcal{T}}$ for the momentum after transport.

  \textit{Base case.}
  $\mathbf{m}_0^{\mathcal{T}} = \mathbf{0}$, so $\inner{\mathbf{m}_0^{\mathcal{T}}}{\mathbf{n}} = 0$ for any $\mathbf{n}$.

  \textit{Inductive step.}
  Suppose $\inner{\mathbf{m}_{t-1}^{\mathcal{T}}}{\mathbf{n}_t} = 0$, where $\mathbf{n}_t = \nabla C(\boldsymbol{\alpha}_t)$ is the normal at the current iterate.
  The projected gradient satisfies $\inner{P_T(\tilde{\mathbf{g}}_t)}{\mathbf{n}_t} = 0$ by construction.
  Adam's momentum update is a convex combination of these two orthogonal-to-$\mathbf{n}_t$ vectors:
  $\mathbf{m}_t^+ = \beta_1 \mathbf{m}_{t-1}^{\mathcal{T}} + (1-\beta_1)\, P_T(\tilde{\mathbf{g}}_t)$,
  so $\inner{\mathbf{m}_t^+}{\mathbf{n}_t} = \beta_1 \cdot 0 + (1-\beta_1) \cdot 0 = 0$.
  After the Adam step and retraction to $\boldsymbol{\alpha}_t' \in \M$, vector transport (Algorithm~\ref{alg:rco}, line~14) projects $\mathbf{m}_t^+$ onto the new tangent space:
  $\mathbf{m}_t^{\mathcal{T}} = \mathbf{m}_t^+ - (\inner{\mathbf{m}_t^+}{\mathbf{n}_t'}/\norm{\mathbf{n}_t'}^2)\,\mathbf{n}_t'$,
  which satisfies $\inner{\mathbf{m}_t^{\mathcal{T}}}{\mathbf{n}_t'} = 0$ by construction of the projection.
  Since $\boldsymbol{\alpha}_{t+1} = \boldsymbol{\alpha}_t'$, we have $\mathbf{n}_{t+1} = \mathbf{n}_t'$, completing the induction.
\end{proof}

\begin{remark}[Adam's per-coordinate scaling]
  \label{rem:adam_bias}
  Part~(ii) applies to gradient descent.
  Adam's step direction is $\mathbf{d} = \hat{\mathbf{m}} \oslash (\sqrt{\hat{\mathbf{v}}} + \epsilon)$ (element-wise), and the per-coordinate scaling does not preserve the tangent plane: $\inner{\mathbf{n}}{\mathbf{d}} \neq 0$ even when $\inner{\mathbf{m}}{\mathbf{n}} = 0$.
  To quantify how much normal leakage Adam introduces, write $D = \mathrm{diag}(1/(\sqrt{\hat{v}_{ik}} + \epsilon))$ for the scaling matrix and $\bar{d}$ for its mean diagonal entry, and decompose $D$ into its uniform part $\bar{d}I$ (which preserves the tangent plane) and the non-uniform residual $D - \bar{d}I$:
  \[
    \inner{\mathbf{n}}{D\mathbf{m}}
    = \mathbf{m}^\top(D - \bar{d}I)\,\mathbf{n}
    + \bar{d}\;\underbrace{\inner{\mathbf{m}}{\mathbf{n}}}_{=\,0\;\text{by (iii)}}.
  \]
  The projection eliminates the direct bias-driven normal component $\bar{d}\,\inner{\mathbf{m}}{\mathbf{n}}$.
  The remaining normal leakage $|\mathbf{m}^\top(D - \bar{d}I)\,\mathbf{n}| \leq \norm{\mathbf{m}}\,\norm{\mathbf{n}}\,\norm{D - \bar{d}I}_{\mathrm{op}}$ is proportional to the non-uniformity of Adam's second-moment estimates: if $D$ were a scalar matrix (uniform scaling), this term would vanish regardless of $\norm{\mathbf{m}}$.
  The factor $\norm{\mathbf{m}}$ still reflects the accumulated (tangential) gradient history, including tangential bias; what the projection guarantees is that bias enters this bound only through the non-uniformity factor $\norm{D - \bar{d}I}_{\mathrm{op}}$, not directly.
  The retraction (Proposition~\ref{prop:retraction}) corrects this residual exactly.
\end{remark}

Proposition~\ref{prop:bias} establishes that the tangent projection and vector transport isolate constraint satisfaction from gradient estimation quality: the constraint-normal component of any bias is eliminated before it reaches the optimizer.
The tangential component $\mathbf{b}_T$ does not cause constraint violation, but it does affect optimization quality.

\subsection{Inequality constraints via slack variables}
\label{app:slack}

For the inequality constraint $C(\boldsymbol{\alpha}) \leq B$, we introduce a scalar slack variable $s \in \R$ and define the augmented equality constraint
\begin{equation}
  \label{eq:slack_app}
  \hat{C}(\boldsymbol{\alpha}, s) = C(\boldsymbol{\alpha}) + s^2 = B.
\end{equation}
This defines a smooth manifold $\hat{\M} \subset \R^{NK+1}$.
The gradient of $\hat{C}$ in the augmented space is
\[
  \nabla \hat{C}(\boldsymbol{\alpha}, s)
  = \bigl(\nabla_{\boldsymbol{\alpha}} C(\boldsymbol{\alpha}),\; 2s\bigr).
\]
When $s > 0$, the normal has a nonzero component in the $s$-direction, so the tangent space includes directions that change $\boldsymbol{\alpha}$ freely (with compensating changes in $s$).
The optimizer can decrease cost below budget by increasing $s$.
As $s \to 0$, the algorithm reduces to the equality-constrained case.

The tangent projection in the augmented space is
\[
  \mathrm{proj}\bigl(\mathbf{g},\, 0\bigr)
  = \bigl(\mathbf{g},\, 0\bigr)
  - \frac{\inner{\mathbf{g}}{\nabla_{\boldsymbol{\alpha}} C}}
         {\norm{\nabla_{\boldsymbol{\alpha}} C}^2 + 4s^2}
  \;\bigl(\nabla_{\boldsymbol{\alpha}} C,\, 2s\bigr),
\]
where $\mathbf{g} = \nabla_{\boldsymbol{\alpha}} L$ and the $s$-component of the objective gradient is zero (the objective does not depend on~$s$).

Retraction proceeds by first updating $(\boldsymbol{\alpha}, s)$ via the projected Adam step, then adjusting: if $C(\boldsymbol{\alpha}) > B$, retract $\boldsymbol{\alpha}$ via binary search to $C(\boldsymbol{\alpha}) = B$ and set $s = 0$; otherwise set $s = \sqrt{B - C(\boldsymbol{\alpha})}$.
This maintains $\hat{C}(\boldsymbol{\alpha}, s) = B$ exactly.

\paragraph{Smoothness at $s = 0$.}
The retraction is continuous but not smooth at the boundary $s = 0$, because the square-root branch $s' = \sqrt{B - C(\boldsymbol{\alpha}')}$ has an infinite derivative as $s' \to 0^+$.
In practice, the optimizer either settles to a point with $s > 0$ (the optimum is strictly under budget) or converges to the equality-constrained manifold ($s = 0$), spending at most a few iterations near the boundary.
We observe no numerical issues from this non-smoothness in any of our experiments.

\subsection{Multiple equality constraints}
\label{app:multi}

Given $q$ constraints $C_j(\boldsymbol{\alpha}) = b_j$ for $j = 1, \ldots, q$, the feasible set is $\M = \bigcap_j C_j^{-1}(b_j)$.
Let $\mathbf{n}_j = \nabla C_j(\boldsymbol{\alpha})$ and $\mathbf{N} = [\mathbf{n}_1 \cdots \mathbf{n}_q]$.
If the $\mathbf{n}_j$ are linearly independent, $\M$ is a smooth $(NK - q)$-dimensional submanifold with tangent space
\[
  T_{\boldsymbol{\alpha}}\M = \{\boldsymbol{\xi} : \mathbf{N}^\top \boldsymbol{\xi} = \mathbf{0}\}.
\]
The tangent projection becomes
\[
  \mathbf{g}_{\mathrm{tan}}
  = \mathbf{g}
  - \mathbf{N}\bigl(\mathbf{N}^\top\mathbf{N}\bigr)^{-1}\mathbf{N}^\top\mathbf{g},
\]
which requires solving a $q \times q$ system, cheap when $q \ll NK$.

Retraction generalizes via Newton root-finding on the $q$-dimensional system: find $\mathbf{t} \in \mathbb{R}^q$ such that $C_j(\boldsymbol{\alpha} + \sum_l t_l \tilde{\mathbf{c}}_l) = b_j$ for all $j$.
The Jacobian entry is
\[
  J_{jl} = \sum_{i=1}^N w_i \, \operatorname{Cov}_{p_i}[c_j,\, c_l],
\]
available in closed form from the current softmax distributions.
For $q = 1$ this reduces to the scalar $\sum_i w_i\,\Var_{p_i}[c]$ from Proposition~\ref{prop:retraction}, and binary search on the same monotone function is the simpler alternative used in Section~\ref{sec:method}.
Quadratic convergence typically requires 3--4 iterations for general~$q$.
When the constraints are separable (each depends on a disjoint subset of groups), independent scalar retraction per constraint suffices.

Figure~\ref{fig:multi_constraint} validates this extension on a synthetic problem with $q{=}16$ simultaneous resource constraints, $N{=}500$ groups, and $K{=}32$ options.
The budget manifold satisfies all 16 constraints to $\sim\!10^{-13}$ (the limit of double-precision arithmetic) throughout optimization.
Lagrangian penalty cannot balance 16 independent multipliers and diverges to violations of order~10; augmented Lagrangian reduces this to $\sim\!10^{-2}$ but still exceeds the budget manifold by ten orders of magnitude.

\begin{figure}[h]
  \centering
  \includegraphics[width=\linewidth]{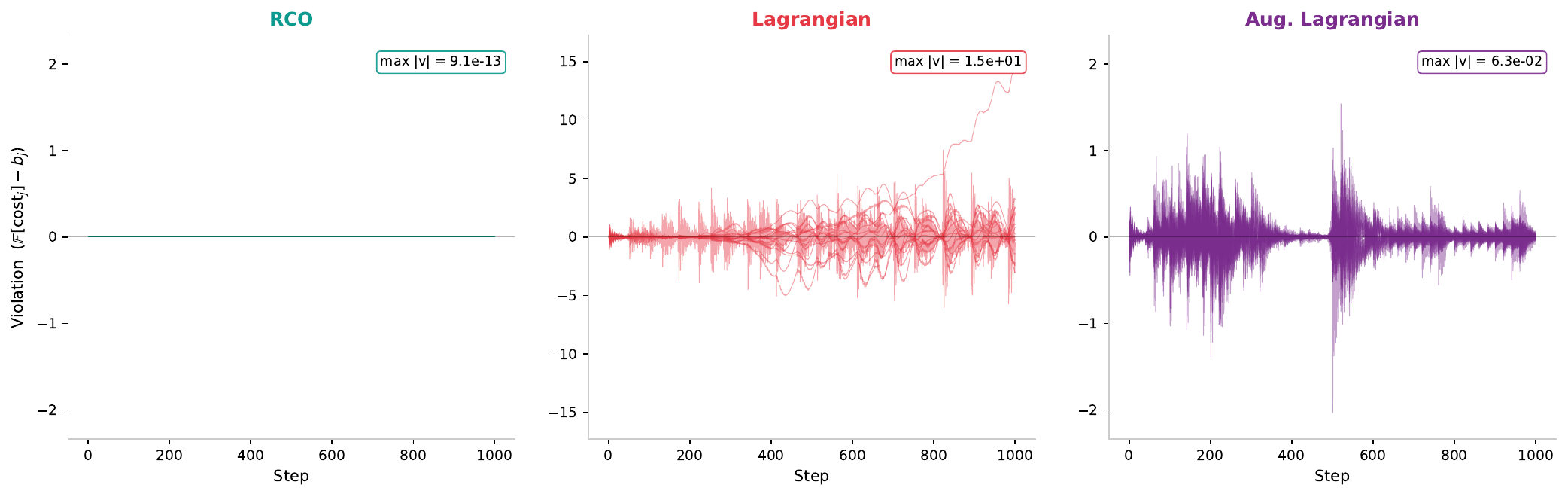}
  \caption{Per-constraint violation trajectories with $q{=}16$ simultaneous budget constraints ($N{=}500$, $K{=}32$).
  Each trace is one constraint; y-axis scales differ between panels.
  Budget manifold enforces all 16 constraints to $\sim\!10^{-13}$ throughout optimization.
  Lagrangian methods lack the geometric structure to balance many constraints and exhibit persistent violations orders of magnitude larger.}
  \label{fig:multi_constraint}
\end{figure}
\section{Application to Non-uniform LLM Compression}
\label{app:llm_setup}

We apply RCO to non-uniform LLM compression, where each network layer (or expert) is a group and the options are compression levels (bitwidths, sparsity rates, or keep/prune).

\paragraph{Objective.}
The loss $L(\mathbf{z}^*)$ in Algorithm~\ref{alg:rco} is the KL divergence between the full-precision model and the model under assignment $\mathbf{z}^*$, evaluated on a calibration set $\mathcal{D}$:
\begin{equation}
  \label{eq:kl_loss}
  L(\mathbf{z}^*)
  \;=\; \frac{1}{|\mathcal{D}|}
    \sum_{x \in \mathcal{D}}
    \mathrm{KL}\!\bigl(
      p_{\mathrm{ref}}(\cdot \mid x)
      \;\big\|\;
      p_{\mathbf{z}^*}(\cdot \mid x)
    \bigr),
\end{equation}
where $p_{\mathrm{ref}}$ is the full-precision model's next-token distribution and $p_{\mathbf{z}^*}$ is the distribution under configuration $\mathbf{z}^*$.
Reference log-probabilities are computed once and cached.
Because $L$ is non-decomposable (Section~\ref{sec:intro}), minimizing it directly is what distinguishes this approach from sensitivity methods that optimize per-group proxy scores.

\paragraph{From assignments to weights.}
Each forward pass evaluates the model under $\mathbf{z}^*$ from Eq.~\eqref{eq:dp}: for MoE expert pruning, a per-expert (keep, prune) mask at costs $(0, 1)$ that zeros pruned outputs and leaves router logits unchanged; for mixed-precision quantization, a one-hot bitwidth selector that assembles $\mathbf{W}_i = \mathbf{W}_i^{\mathrm{ref}} + \sum_k z^*_{ik} \boldsymbol{\Delta}_{ik}$ from pre-computed GPTQ residuals $\boldsymbol{\Delta}_{ik} = \mathbf{W}_i^{(k)} - \mathbf{W}_i^{\mathrm{ref}}$~\citep{frantar2023gptq}.
DP costs and budget are taken in integer units (bitwidths scaled by parameter fraction for quantization, binary keep/prune for expert pruning).
Algorithm~\ref{alg:rco} is agnostic to grouping and to the initial logits $\boldsymbol{\alpha}_0$: any partition of compressible units, any per-group option set, and any prior on $\boldsymbol{\alpha}_0$ leave the iteration unchanged, with feasibility absorbed by the retraction-based initialization step (Section~\ref{sec:application}).
Sensitivity scores from prior work, such as REAP~\citep{lasby2025reap} for expert pruning or layer-output L2 sensitivities for quantization, serve as valid initializations alongside uniform $\boldsymbol{\alpha}_0 = \mathbf{0}$.
The STE replaces $z^*_{ik}$ with $\hat{p}_{ik} = \softmax(\hat{\boldsymbol{\alpha}}_i)_k$ in the backward pass, so gradients flow through the softmax Jacobian at the perturbed logits.

\section{Full Expert Pruning Results}
\label{app:expert_pruning_full}

\paragraph{Gradient signal under sparse routing.}
In MoE models, only the top-$k$ routed experts are computed per token; the STE pruning mask is applied after routing by scaling each selected expert's contribution.
An expert's pruning logit $\alpha_i$ therefore receives gradient only from tokens where expert $i$ was routed, which is a small fraction of the calibration set.
This makes the per-expert gradient signal inherently sparser than in the quantization setting, where every layer processes every token.
The sparsity explains why the Gumbel sample count $g$ is the most important hyperparameter in the expert pruning ablations: more samples per step expose each expert to more diverse pruning contexts, compensating for the low per-token coverage.
Because router weights are frozen, pruned experts are masked out at inference in the same way as during optimization (the remaining top-$k$ experts are upweighted); there is no approximation gap between the STE training objective and deployment.

\subsection{OLMoE-1B-7B: sparsity sweep}

Table~\ref{tab:olmoe_sparsity_full} reports all eight benchmarks across sparsity levels from 5\% to 50\% (RCO, 300 steps).

\begin{table}[H]
  \caption{OLMoE-1B-7B per-benchmark results across sparsity levels (RCO, 300 steps).
  Avg is the unweighted mean of all eight benchmarks.}
  \label{tab:olmoe_sparsity_full}
  \centering
  \small
  \setlength{\tabcolsep}{4pt}
  \begin{tabular}{@{}l ccccccc@{}}
    \toprule
    & Full & 5\% & 10\% & 15\% & 20\% & 25\% & 50\% \\
    \midrule
    ARC-C & .493 & .497 & .489 & .497 & .470 & .458 & .334 \\
    ARC-E & .758 & .760 & .741 & .738 & .718 & .698 & .525 \\
    BoolQ & .768 & .757 & .750 & .717 & .707 & .713 & .602 \\
    HellaSwag & .806 & .797 & .779 & .759 & .735 & .711 & .469 \\
    MMLU & .534 & .531 & .526 & .506 & .495 & .479 & .301 \\
    OBQA & .468 & .470 & .466 & .462 & .450 & .456 & .326 \\
    RTE & .715 & .726 & .693 & .704 & .718 & .755 & .570 \\
    WinoGrande & .680 & .690 & .650 & .645 & .624 & .596 & .533 \\
    \midrule
    \textbf{Avg} & \textbf{.653} & \textbf{.653} & \textbf{.637} & \textbf{.629} & \textbf{.615} & \textbf{.608} & \textbf{.458} \\
    \bottomrule
  \end{tabular}
\end{table}

At 5\% sparsity, every individual benchmark remains within 1--2 percentage points of the full model; several (ARC-C, ARC-E, OBQA, RTE) are within noise.
Degradation is monotonic for most benchmarks, with the exception of RTE, which improves under moderate pruning (0.715 $\to$ 0.755 at 25\%).
The largest per-benchmark drops occur at 50\% sparsity on HellaSwag ($-$33.7 points) and MMLU ($-$23.3 points), consistent with these benchmarks' sensitivity to knowledge capacity.

\subsection{OLMoE-1B-7B: full iteration sweep}

Table~\ref{tab:olmoe_iter_full} reports the full per-benchmark iteration sweep on OLMoE-1B-7B at 25\% sparsity.

\begin{table}[H]
  \caption{OLMoE-1B-7B at 25\% sparsity: full per-benchmark iteration sweep.
  All eight benchmarks contributing to Avg.}
  \label{tab:olmoe_iter_full}
  \centering
  \small
  \setlength{\tabcolsep}{3.5pt}
  \begin{tabular}{@{}l cccccccc c r@{}}
    \toprule
    & ARC-C & ARC-E & BoolQ & HSwag & MMLU & OBQA & RTE & Wino & \textbf{Avg} & Time \\
    \midrule
    Full model & .493 & .758 & .768 & .806 & .534 & .468 & .715 & .680 & .653 & - \\
    \midrule
    REAP & .408 & .676 & .683 & .660 & .452 & .406 & .664 & .580 & .566 & {$<$1\,m} \\
    EvoESAP & .427 & .641 & .673 & .685 & .474 & .386 & .614 & \textbf{.649} & .581 & 5.8\,h \\
    \midrule
    RCO ($T{=}50$) & .442 & .713 & .693 & .691 & .476 & .440 & .718 & .602 & .597 & $\sim$10\,m \\
    RCO ($T{=}100$) & .446 & .716 & .692 & .713 & .465 & .438 & .722 & .608 & .600 & $\sim$23\,m \\
    RCO ($T{=}150$) & .454 & .727 & .697 & .703 & .427 & .452 & .668 & .616 & .593 & $\sim$32\,m \\
    RCO ($T{=}200$) & .459 & .701 & .701 & .708 & .475 & .420 & .697 & .608 & .596 & $\sim$44\,m \\
    RCO ($T{=}300$) & \textbf{.458} & \textbf{.698} & \textbf{.713} & \textbf{.711} & \textbf{.479} & \textbf{.456} & \textbf{.755} & .596 & \textbf{.608} & $\sim$66\,m \\
    RCO ($T{=}500$) & .456 & .699 & .698 & .711 & .479 & .444 & .722 & .612 & .603 & $\sim$108\,m \\
    \bottomrule
  \end{tabular}
\end{table}

RCO reaches 0.597 Avg at 50 steps and improves to 0.608 at 300 steps.
Returns diminish beyond 300 steps (500 steps: 0.603).
Individual benchmarks show non-monotonic behavior during optimization (e.g., MMLU dips at $T{=}150$), reflecting the stochastic nature of Gumbel-STE sampling, but the aggregate Avg trends upward through 300 steps.

\subsection{Qwen3-30B-A3B: full per-benchmark comparison}

Table~\ref{tab:qwen3_full} reports the full per-benchmark breakdown of RCO vs.\ EvoESAP on Qwen3-30B-A3B at 25\% and 50\% sparsity (summarized in Figure~\ref{fig:summary}).

\begin{table}[H]
  \caption{Qwen3-30B-A3B per-expert pruning: full benchmark breakdown.
  RCO wins on all 8 benchmarks at 25\% and on 6/8 at 50\%.
  Largest gains at 25\%: HellaSwag (+8.7), MMLU (+7.1), ARC-C (+7.0).}
  \label{tab:qwen3_full}
  \centering
  \small
  \setlength{\tabcolsep}{3.5pt}
  \begin{tabular}{@{}l l cccccccc c@{}}
    \toprule
    Sparsity & Method & ARC-C & ARC-E & BoolQ & HSwag & MMLU & OBQA & RTE & Wino & \textbf{Avg} \\
    \midrule
    0\% & Full model & .625 & .838 & .887 & .797 & .802 & .446 & .769 & .736 & .737 \\
    \midrule
    \multirow{2}{*}{25\%} & EvoESAP & .514 & .758 & .866 & .670 & .662 & .366 & .780 & .702 & .665 \\
    & \textbf{RCO} & \textbf{.584} & \textbf{.797} & \textbf{.885} & \textbf{.757} & \textbf{.733} & \textbf{.424} & \textbf{.783} & \textbf{.714} & \textbf{.710} \\
    \midrule
    \multirow{2}{*}{50\%} & EvoESAP & \textbf{.437} & \textbf{.638} & .805 & .518 & .576 & .332 & .747 & .629 & .585 \\
    & \textbf{RCO} & .428 & .632 & \textbf{.835} & \textbf{.624} & \textbf{.587} & \textbf{.336} & \textbf{.751} & \textbf{.647} & \textbf{.605} \\
    \bottomrule
  \end{tabular}
\end{table}

At 25\% sparsity, RCO improves over EvoESAP on every benchmark.
The gains are largest on benchmarks that test world knowledge and reasoning (HellaSwag~+8.7, MMLU~+7.1, ARC-C~+7.0), suggesting that the manifold search better preserves the experts carrying factual and reasoning capacity.
At 50\% sparsity, EvoESAP retains a small edge on ARC-C~($-$0.9) and ARC-E~($-$0.6), while RCO dominates on HellaSwag~(+10.6), BoolQ~(+3.0), and WinoGrande~(+1.8).

\subsection{Qwen3-Coder-Next: calibration and allocation ablation}
\label{app:coder_next_full}

Qwen3-Coder-Next has 512 routed experts per layer with 10 active per token.
We evaluate eight RCO variants spanning two calibration domains (coding: evol-codealpaca; general: FineWeb-Edu), two sparsity levels (25\%, 50\%), and two budget allocation strategies (uniform, nonuniform).
All evaluations use vLLM with bf16 and greedy decoding.

\paragraph{Uniform vs.\ nonuniform allocation.}
Each variant prunes a fixed fraction of total experts across all layers.
Uniform allocation keeps the same number of experts per layer (e.g., 384 at 25\%, 256 at 50\%).
Nonuniform allocation lets the optimizer freely distribute the pruning budget based on calibration loss: critical layers keep more experts, redundant layers are pruned more aggressively.
The effect is stark on coding benchmarks (Table~\ref{tab:coder_next_coding}): at 50\% sparsity, nonuniform recovers 97\% of HumanEval versus 55\% for uniform, a 42-point gap.
At 25\%, the gap narrows to 8 points (100\% vs.\ 92\%).
The gap grows with sparsity because uniform allocation forces equal pruning on layers that differ substantially in sensitivity; at low sparsity, even sensitive layers retain enough experts to function.

\paragraph{Coding vs.\ general calibration.}
RCO minimizes KL divergence on the calibration dataset, so the choice of calibration data determines which capabilities the pruned model preserves.
Coding-calibrated variants (evol-codealpaca) preserve code generation at the cost of general knowledge; general-calibrated variants (FineWeb-Edu) preserve Avg at the cost of coding ability.
The trade-off is sharp: general-calibrated variants lose coding ability almost entirely (HumanEval $\leq$6.1\% at 25\% sparsity), while coding-calibrated variants at 50\% sparsity still retain 76\% of Avg (Table~\ref{tab:coder_next_general}).
The asymmetry reflects the base model's specialization: its coding experts are concentrated in a small subset, easily lost when calibration does not exercise them.

\paragraph{Coding benchmarks.}
Table~\ref{tab:coder_next_coding} reports HumanEval (pass@1) and MBPP (pass@1) with recovery relative to the full model.

\begin{table}[H]
  \caption{Qwen3-Coder-Next coding benchmarks across calibration domain and allocation strategy.
  Recovery is relative to the full (unpruned) model.
  Bold: best among pruned models per sparsity level.
  All values in \%.}
  \label{tab:coder_next_coding}
  \centering
  \small
  \setlength{\tabcolsep}{4pt}
  \begin{tabular}{@{}l ll cc cc@{}}
    \toprule
    Sparsity & Cal.\ & Alloc.\ & HE & rec.\ & MBPP & rec.\ \\
    \midrule
    0\% & \multicolumn{2}{l}{Full model} & 74.4 & -- & 76.4 & -- \\
    \midrule
    \multirow{4}{*}{25\%} & coding & uniform & 68.3 & 92 & \textbf{68.8} & \textbf{90} \\
    & coding & nonunif.\ & \textbf{74.4} & \textbf{100} & 67.8 & 89 \\
    & general & uniform & 4.3 & 6 & 4.6 & 6 \\
    & general & nonunif.\ & 6.1 & 8 & 5.8 & 8 \\
    \midrule
    \multirow{4}{*}{50\%} & coding & uniform & 40.9 & 55 & 53.4 & 70 \\
    & coding & nonunif.\ & \textbf{72.0} & \textbf{97} & \textbf{69.0} & \textbf{90} \\
    & general & uniform & 0.0 & 0 & 1.8 & 2 \\
    & general & nonunif.\ & 1.2 & 2 & 1.0 & 1 \\
    \bottomrule
  \end{tabular}
\end{table}

\paragraph{General benchmarks.}
Table~\ref{tab:coder_next_general} reports all eight general benchmarks.
Avg is the unweighted mean.
General-calibrated variants preserve Avg up to 100\% at 25\% (nonuniform) and 92\% at 50\% (uniform).
At 50\% general sparsity, uniform (65.4) slightly outperforms nonuniform (64.4), in contrast to the coding results where nonuniform dominates.

\begin{table}[H]
  \caption{Qwen3-Coder-Next general benchmarks (Avg and per-benchmark breakdown).
  Bold: best among pruned models per sparsity level.
  All values in \%.}
  \label{tab:coder_next_general}
  \centering
  \small
  \setlength{\tabcolsep}{2.5pt}
  \begin{tabular}{@{}l ll cccccccc c@{}}
    \toprule
    Sparsity & Cal.\ & Alloc.\ & ARC-C & ARC-E & BoolQ & HSwag & MMLU & OBQA & RTE & Wino & Avg \\
    \midrule
    0\% & \multicolumn{2}{l}{Full model} & 60.6 & 82.1 & 88.5 & 77.5 & 76.7 & 43.0 & 76.5 & 66.6 & 71.4 \\
    \midrule
    \multirow{4}{*}{25\%}
    & coding & uniform & 50.1 & 72.2 & 86.4 & 69.0 & 71.0 & 38.0 & 72.9 & 65.5 & 65.6 \\
    & coding & nonunif.\ & 46.2 & 66.2 & 85.1 & 66.5 & 68.0 & 36.2 & 77.6 & 64.2 & 63.8 \\
    & general & uniform & 60.0 & 80.7 & 87.6 & \textbf{78.5} & 70.4 & \textbf{45.2} & 75.1 & 67.7 & 70.7 \\
    & general & nonunif.\ & \textbf{61.8} & \textbf{82.2} & \textbf{88.2} & 77.6 & \textbf{71.2} & 44.2 & \textbf{76.2} & \textbf{69.9} & \textbf{71.4} \\
    \midrule
    \multirow{4}{*}{50\%}
    & coding & uniform & 40.3 & 64.1 & 78.9 & 57.8 & 56.4 & 35.0 & 67.1 & 61.6 & 57.7 \\
    & coding & nonunif.\ & 35.6 & 55.5 & 77.6 & 54.8 & 54.3 & 34.0 & 64.6 & 60.3 & 54.6 \\
    & general & uniform & \textbf{54.1} & \textbf{77.1} & 83.9 & \textbf{70.9} & \textbf{61.0} & \textbf{42.8} & \textbf{67.5} & \textbf{65.8} & \textbf{65.4} \\
    & general & nonunif.\ & 52.6 & 76.2 & \textbf{84.2} & 70.8 & 59.5 & 41.4 & 67.5 & 63.5 & 64.4 \\
    \bottomrule
  \end{tabular}
\end{table}

\section{Full MoE Mixed-Precision Quantization Results}
\label{app:moe_quant_full}

\paragraph{Setup.}
MxMoE calibrates a per-(expert, weight-block, bitwidth) layer-output L2 sensitivity table on FineWeb-Edu (64 samples, 2048 tokens), then solves a per-layer ILP that minimizes the sum of these sensitivities under a memory budget; the \texttt{gate} and \texttt{up} projections share a bitwidth and \texttt{down} is free, and attention is fixed at FP16 by design~\citep{duanmu2025mxmoe}.
RCO reads the same database and runs the manifold search to minimize calibration KL divergence under the same budget, with one bitwidth decision per expert (all three weights tied), a strictly smaller search space than MxMoE's per-weight allocation.
\textbf{RCO*} locks attention at FP16 to match MxMoE's design exactly, isolating the allocation algorithm; \textbf{RCO} additionally treats attention as searchable groups under a total-memory budget.
Reproduction details and accounting adjustments are in Appendix~\ref{app:mxmoe_details}.

\paragraph{Aggregate metrics.}
RCO* outperforms MxMoE on every aggregate metric at every target despite a smaller search space.
The Wikitext-2 PPL gap is $-0.43 / -0.18 / -0.04$ at $2.5 / 3.0 / 3.5$ bits, narrowing toward the FP16 floor (6.79) with more bits, and zero-shot accuracy favors RCO* by $0.3$--$0.6$ points.
RCO, which additionally quantizes attention to free expert budget at the same total memory, improves a further $0.01$--$0.08$ Wikitext-2 PPL over RCO* and adds another $0.0$--$0.4$ zero-shot points.

\paragraph{Saturation against uniform allocation.}
The uniform reference rows in Table~\ref{tab:moe_quant_main} bound where mixed precision matters: at 4 bit (uniform), the model is already within $0.7$ Wikitext-2 PPL and $0.7$ zero-shot points of FP16, so further bit allocation can only yield diminishing returns.
RCO at 3.5\,bit matches or beats uniform 4-bit on every metric (W2 6.88 vs 6.91, C4 10.23 vs 10.24, AvgPPL 8.76 vs 8.78, Avg ZS .683 vs .680) while using half a bit less per expert: the manifold has saturated against the post-quantization quality available from this GPTQ database.
At 2.5\,bit the picture is opposite: uniform 3-bit (W2 7.39, AvgPPL 9.38) is competitive with mixed precision in absolute terms but uses $0.5$ more bits per expert, so RCO at 2.5 bit recovers most of uniform 3-bit's quality at strictly lower memory.

\paragraph{Per-task zero-shot.}
On individual benchmarks, MxMoE retains an edge on a few (Winogrande at 3.0 and 3.5\,bit, ARC-Challenge at 2.5\,bit), but the gaps are within $0.4$--$2.1$ points; RCO and RCO* take the remaining benchmarks across all targets, with the largest gains on LAMBADA-OpenAI ($+1.4$ to $+1.6$ at 2.5\,bit) and ARC-Easy ($+2.9$ at 3.0\,bit).

\begin{table}[H]
  \caption{Full Qwen1.5-MoE-A2.7B mixed-precision quantization results, including the 3.0\,bit block omitted from Table~\ref{tab:moe_quant_main} for space.
  PPL ($\downarrow$) on Wikitext-2 (W2), C4, FineWeb-Edu (FW); zero-shot accuracy ($\uparrow$, decimals) on six lm-eval-harness tasks; Avg is the unweighted mean.}
  \label{tab:moe_quant_full}
  \centering
  \footnotesize
  \setlength{\tabcolsep}{3pt}
  \begin{tabular}{@{}l l cccc ccccccc@{}}
    \toprule
    & & \multicolumn{4}{c}{Perplexity ($\downarrow$)} & \multicolumn{7}{c}{Zero-shot accuracy ($\uparrow$)} \\
    \cmidrule(lr){3-6} \cmidrule(lr){7-13}
    Bits & Method & W2 & C4 & FW & Avg PPL & PIQA & HSwag & ARC-E & ARC-C & Wino & LAMBADA & Avg ZS \\
    \midrule
    FP16 & full model & 6.79 & 10.05 & 9.07 & 8.64 & .805 & .773 & .690 & .445 & .695 & .713 & .687 \\
    \midrule
    \multirow{3}{*}{2.5 bit}
    & MxMoE & 7.90 & 12.44 & 10.28 & 10.21 & .786 & .746 & .659 & \textbf{.416} & .653 & .662 & .653 \\
    & RCO* & 7.47 & 11.57 & 9.81 & 9.61 & \textbf{.793} & .750 & .652 & .411 & .651 & \textbf{.677} & .656 \\
    & RCO  & \textbf{7.40} & \textbf{11.44} & \textbf{9.74} & \textbf{9.53} & .792 & \textbf{.752} & \textbf{.667} & .409 & \textbf{.665} & .676 & \textbf{.660} \\
    \midrule
    \multirow{4}{*}{3.0 bit}
    & uniform & 7.39 & 11.05 & 9.71 & 9.38 & .789 & .761 & .650 & .436 & .663 & .689 & .664 \\
    & MxMoE & 7.21 & 10.90 & 9.51 & 9.21 & .796 & .764 & .655 & .434 & \textbf{.688} & .686 & .671 \\
    & RCO* & 7.03 & 10.54 & 9.32 & 8.96 & \textbf{.799} & .766 & \textbf{.684} & .442 & .673 & \textbf{.700} & \textbf{.677} \\
    & RCO  & \textbf{7.02} & \textbf{10.47} & \textbf{9.30} & \textbf{8.93} & .798 & \textbf{.767} & .677 & \textbf{.453} & .667 & \textbf{.700} & \textbf{.677} \\
    \midrule
    \multirow{3}{*}{3.5 bit}
    & MxMoE & 6.94 & 10.35 & 9.23 & 8.84 & \textbf{.801} & .768 & .659 & .433 & \textbf{.686} & .697 & .674 \\
    & RCO* & 6.90 & 10.25 & 9.18 & 8.77 & .799 & .770 & \textbf{.687} & .445 & .673 & .705 & .680 \\
    & RCO  & \textbf{6.88} & \textbf{10.23} & \textbf{9.17} & \textbf{8.76} & \textbf{.801} & \textbf{.771} & .686 & \textbf{.453} & .684 & \textbf{.706} & \textbf{.683} \\
    \midrule
    \multirow{1}{*}{4 bit}
    & uniform & 6.91 & 10.24 & 9.20 & 8.78 & .809 & .771 & .679 & .440 & .681 & .702 & .680 \\
    \bottomrule
  \end{tabular}
\end{table}
\section{Mixed-Precision Quantization: Ablation Study}
\label{app:quant_ablations}

We report ablation sweeps for RCO applied to mixed-precision quantization on Qwen3-8B (252 linear layers, 7 bitwidth options: 2, 3, 4, 5, 6, 7, 8) at a target of 2.5 average bits per parameter.
All layers are treated as independent groups (no structural grouping unless stated otherwise).
Calibration uses FineWeb-Edu with 256 samples at sequence length 2048 unless noted.
Evaluation reports perplexity on two held-out corpora: FineWeb-Edu (FW) and C4, each with 131k tokens at sequence length 2048.
The FP16 baseline achieves FW=10.96, C4=17.20.

\paragraph{Hardware.}
All ablation sweeps use an RTX 3090 (24\,GB).
The baseline comparison (Table~\ref{tab:quant_main}) reports wall times from an RTX A6000 (48\,GB).

\subsection{Gumbel samples per step}
\label{app:quant_gumbel}

Each optimization step draws $g$ independent Gumbel-softmax assignments and averages their STE gradients before the optimizer update.
More samples reduce gradient variance over the discrete assignment space at the cost of additional forward-backward passes per step.

\begin{table}[H]
  \caption{Effect of Gumbel sample count ($g$).
  Fixed: $T{=}200$, $\tau_{\min}{=}0.01$, lr$=$0.1, seed=42, cal=256.
  Hardware: RTX 3090.}
  \label{tab:quant_gumbel}
  \centering
  \small
  \begin{tabular}{@{}r cc r r@{}}
    \toprule
    $g$ & FW$\downarrow$ & C4$\downarrow$ & Wall & Fwd passes \\
    \midrule
    1  & 16.46 & 25.63 & 23\,m & 200 \\
    4  & 15.61 & 24.29 & 62\,m & 800 \\
    8  & 15.66 & 24.13 & 114\,m & 1600 \\
    16 & 15.51 & 24.30 & 218\,m & 3200 \\
    \bottomrule
  \end{tabular}
\end{table}

The largest gain occurs from $g{=}1$ to $g{=}4$ ($-$0.85 FW perplexity). Further increases to $g{=}16$ yield diminishing and non-monotonic gains on FW (within seed noise), though C4 improves monotonically.
Gradient variance over discrete assignments is the primary bottleneck: more samples provide cleaner per-layer signal about which bitwidth to prefer.

\subsection{Optimization steps}
\label{app:quant_steps}

Temperature anneals exponentially from $\tau_0{=}1.0$ to $\tau_{\min}$ over $T$ steps, so more steps produce slower annealing.

\begin{table}[H]
  \caption{Effect of optimization steps ($T$).
  Fixed: $g{=}4$, $\tau_{\min}{=}0.01$, lr$=$0.1, seed=42, cal=256.
  Hardware: RTX 3090.}
  \label{tab:quant_steps}
  \centering
  \small
  \begin{tabular}{@{}r cc r r@{}}
    \toprule
    $T$ & FW$\downarrow$ & C4$\downarrow$ & Wall & Fwd passes \\
    \midrule
    100 & 15.62 & 24.17 & 35\,m & 400 \\
    200 & 15.61 & 24.29 & 62\,m & 800 \\
    400 & 15.68 & 24.34 & 115\,m & 1600 \\
    \bottomrule
  \end{tabular}
\end{table}

Results are non-monotonic: $T{=}100$ and $T{=}200$ are nearly identical, while $T{=}400$ is slightly worse.
The optimization converges quickly; additional steps at low temperature accumulate biased STE gradients without improving the solution.

\subsection{Samples vs.\ steps at fixed compute}
\label{app:quant_compute_matched}

For a fixed compute budget (same number of forward passes), is it better to increase Gumbel samples per step or to run more steps?

\begin{table}[H]
  \caption{Compute-matched configurations.
  $g{=}32$, $T{=}50$ achieves the same quality as $g{=}16$, $T{=}200$ at half the compute (1600 vs.\ 3200 forward passes).
  Note: $g{=}32$ uses seed=2; others use seed=42.
  Hardware: RTX 3090.}
  \label{tab:quant_compute}
  \centering
  \small
  \begin{tabular}{@{}rr cc r r@{}}
    \toprule
    $g$ & $T$ & FW$\downarrow$ & C4$\downarrow$ & Fwd passes & Wall \\
    \midrule
    4  & 200 & 15.61 & 24.29 & 800 & 62\,m \\
    8  & 100 & 15.68 & 24.28 & 800 & 61\,m \\
    16 & 200 & 15.53 & 24.13 & 3200 & 202\,m \\
    32 & 50  & 15.56 & 24.10 & 1600 & 110\,m \\
    \bottomrule
  \end{tabular}
\end{table}

Sample efficiency dominates step count.
The STE gradient benefits more from averaging over diverse discrete assignments than from additional optimization iterations at low temperature.

\subsection{Temperature minimum}
\label{app:quant_tau}

Temperature decays exponentially from $\tau_0{=}1.0$ to $\tau_{\min}$.
Lower $\tau_{\min}$ forces sharper assignments earlier; higher $\tau_{\min}$ leaves more uncertainty at convergence.
The final discrete assignment is extracted via budget-constrained argmax regardless of $\tau_{\min}$.

\begin{table}[H]
  \caption{Effect of $\tau_{\min}$.
  Fixed: $g{=}4$, $T{=}200$, lr$=$0.1, seed=42, cal=256.
  Hardware: RTX 3090.}
  \label{tab:quant_tau}
  \centering
  \small
  \begin{tabular}{@{}r cc@{}}
    \toprule
    $\tau_{\min}$ & FW$\downarrow$ & C4$\downarrow$ \\
    \midrule
    0.05  & 15.75 & 24.55 \\
    \textbf{0.01}  & \textbf{15.61} & \textbf{24.29} \\
    0.005 & 15.77 & 24.42 \\
    0.001 & 16.26 & 25.20 \\
    \bottomrule
  \end{tabular}
\end{table}

$\tau_{\min}{=}0.01$ is optimal.
Higher values leave too many layers with soft, undecided assignments.
Lower values force premature hard decisions before layers have accumulated sufficient gradient signal, locking in suboptimal assignments early.

\subsection{Learning rate}
\label{app:quant_lr}

The Adam optimizer step size in logit space.

\begin{table}[H]
  \caption{Effect of learning rate.
  Fixed: $g{=}4$, $T{=}200$, $\tau_{\min}{=}0.01$, seed=42, cal=256.
  Hardware: RTX 3090.}
  \label{tab:quant_lr}
  \centering
  \small
  \begin{tabular}{@{}r cc@{}}
    \toprule
    LR & FW$\downarrow$ & C4$\downarrow$ \\
    \midrule
    0.05 & 15.66 & 24.33 \\
    0.1  & 15.61 & 24.29 \\
    0.2  & 15.63 & 24.40 \\
    \bottomrule
  \end{tabular}
\end{table}

Results are robust across a 4$\times$ range (within seed noise of $\sim$0.2 PPL).
The manifold projection removes the budget-changing gradient component, and the retraction prevents constraint violation, leaving the optimizer in a well-conditioned subspace.

\subsection{Antithetic sampling}
\label{app:quant_antithetic}

Antithetic sampling pairs each uniform draw $U$ with $1{-}U$, producing two negatively correlated Gumbel variates per draw.
This halves the number of independent assignments explored per step.

\begin{table}[H]
  \caption{Effect of antithetic sampling.
  Fixed: $g{=}4$, $T{=}200$, $\tau_{\min}{=}0.01$, lr$=$0.1, seed=42, cal=256.
  Hardware: RTX 3090.}
  \label{tab:quant_antithetic}
  \centering
  \small
  \begin{tabular}{@{}l cc@{}}
    \toprule
    Antithetic & FW$\downarrow$ & C4$\downarrow$ \\
    \midrule
    No  & \textbf{15.61} & \textbf{24.29} \\
    Yes & 15.97 & 24.63 \\
    \bottomrule
  \end{tabular}
\end{table}

Antithetic sampling hurts.
With $g{=}4$, the two base draws plus two mirrored draws reduce diversity in the discrete assignment space.
The variance reduction from negative correlation does not compensate for the reduced exploration.
Independent samples explore the combinatorial space more effectively.

\subsection{Seed variance}
\label{app:quant_seed}

The random seed affects calibration data ordering, Gumbel noise sequences, and optimizer initialization.

\begin{table}[H]
  \caption{Seed variance.
  Fixed: $g{=}4$, $T{=}200$, $\tau_{\min}{=}0.01$, lr$=$0.1, cal=256.
  Hardware: RTX 3090.}
  \label{tab:quant_seed}
  \centering
  \small
  \begin{tabular}{@{}r cc@{}}
    \toprule
    Seed & FW$\downarrow$ & C4$\downarrow$ \\
    \midrule
    2  & 15.57 & 24.24 \\
    42 & 15.61 & 24.29 \\
    0  & 15.83 & 24.84 \\
    1  & 15.96 & 25.23 \\
    3  & 15.98 & 24.71 \\
    \bottomrule
  \end{tabular}
\end{table}

FW perplexity: mean=15.79, std=0.18, range=0.41.
The range exceeds most hyperparameter effects except Gumbel sample count.
Approximately 180 of 252 layers converge to the same assignment across seeds; the remaining $\sim$70 ambiguous layers (where multiple bitwidths yield similar KL) account for the variance.
More Gumbel samples reduce this variance by providing better gradient signal for ambiguous layers.

\subsection{Layer grouping}
\label{app:quant_grouping}

Structural grouping forces layers with similar roles (e.g., gate\_proj and up\_proj, or q/k/v\_proj) to share a bitwidth, reducing the search space from 252 to $\sim$144 groups.

\begin{table}[H]
  \caption{Effect of layer grouping.
  Fixed: $g{=}4$, $T{=}200$, $\tau_{\min}{=}0.01$, lr$=$0.1, seed=42.
  Hardware: RTX 3090.}
  \label{tab:quant_grouping}
  \centering
  \small
  \begin{tabular}{@{}l r cc@{}}
    \toprule
    Grouping & Cal & FW$\downarrow$ & C4$\downarrow$ \\
    \midrule
    None    & 256  & \textbf{15.60} & \textbf{24.14} \\
    Grouped & 256  & 16.11 & 25.36 \\
    None    & 512  & \textbf{15.82} & \textbf{24.83} \\
    Grouped & 512  & 16.94 & 26.98 \\
    None    & 1024 & \textbf{16.14} & \textbf{25.13} \\
    Grouped & 1024 & 16.40 & 25.73 \\
    \bottomrule
  \end{tabular}
\end{table}

No grouping consistently outperforms grouped assignment by 0.3--1.1 FW perplexity.
Individual layers within structural groups can have different quantization sensitivity, so forcing a shared bitwidth is suboptimal.
RCO scales well to the full 252-dimensional search space without grouping.

\subsection{Calibration samples}
\label{app:quant_cal}

More calibration sequences produce a less noisy KL objective but require proportionally more RAM for cached reference log-probabilities ($\sim$159\,GB for 256 samples).

\begin{table}[H]
  \caption{Effect of calibration set size.
  Note: cal=512 runs use lr$=$0.2; comparison is not perfectly controlled.
  Hardware: RTX 3090.}
  \label{tab:quant_cal}
  \centering
  \small
  \begin{tabular}{@{}r l cc@{}}
    \toprule
    Cal & Config & FW$\downarrow$ & C4$\downarrow$ \\
    \midrule
    256 & $g{=}8$, $T{=}200$, lr=0.1 & 15.78 & 24.48 \\
    512 & $g{=}8$, $T{=}200$, lr=0.2 & 15.65 & 24.51 \\
    256 & $g{=}4$, $T{=}200$, lr=0.1 & 15.81 & 24.73 \\
    512 & $g{=}4$, $T{=}200$, lr=0.2 & 16.23 & 25.38 \\
    \bottomrule
  \end{tabular}
\end{table}

No clear benefit from doubling calibration data.
The dominant source of variance is the discrete assignment landscape (seed variance, Section~\ref{app:quant_seed}), not calibration noise.
At 256 samples (524k tokens), the gradient signal is sufficient for the $\sim$252 decisions.

\subsection{Scaling: Gumbel samples with fixed steps}
\label{app:quant_scaling}

Table~\ref{tab:quant_scaling} summarizes the compute-quality tradeoff as Gumbel sample count increases, alongside EvoPress.

\begin{table}[H]
  \caption{Scaling behavior.
  RCO with $g{=}32$, $T{=}50$ matches or exceeds EvoPress on both FW and C4.
  The compute-optimal strategy maximizes samples per step while minimizing step count.
  RCO rows on RTX 3090; EvoPress row from the matched-protocol reproduction in Table~\ref{tab:quant_main} (RTX A6000).
}
  \label{tab:quant_scaling}
  \centering
  \small
  \begin{tabular}{@{}l rr cc r@{}}
    \toprule
    Method & $g$ & $T$ & FW$\downarrow$ & C4$\downarrow$ & Wall \\
    \midrule
    RCO         & 1  & 200 & 16.46 & 25.63 & 23\,m \\
    RCO         & 4  & 200 & 15.61 & 24.29 & 62\,m \\
    RCO         & 8  & 200 & 15.66 & 24.13 & 114\,m \\
    RCO         & 16 & 200 & \textbf{15.53} & 24.30 & 218\,m \\
    \textbf{RCO} & \textbf{32} & \textbf{50} & 15.56 & \textbf{24.10} & \textbf{110\,m} \\
    \midrule
    EvoPress    & = & 100 gen & 15.64 & 24.63 & 11--14\,h \\
    \bottomrule
  \end{tabular}
\end{table}

\subsection{Recommended defaults}
\label{app:quant_defaults}

Table~\ref{tab:quant_defaults} summarizes the recommended hyperparameters based on the ablations above.

\begin{table}[H]
  \caption{Recommended RCO hyperparameters for mixed-precision quantization.}
  \label{tab:quant_defaults}
  \centering
  \small
  \begin{tabular}{@{}l c c l@{}}
    \toprule
    Parameter & Recommended & Robust range & Note \\
    \midrule
    Gumbel samples ($g$) & 16--32 & 4--64 & More is better; diminishing returns \\
    Steps ($T$) & 50--200 & 50--400 & Non-monotonic; fewer is fine with more samples \\
    $\tau_{\min}$ & 0.01 & 0.005--0.05 & Too low: premature decisions \\
    Learning rate & 0.1 & 0.05--0.2 & Robust due to manifold projection \\
    Antithetic & No & - & Reduces assignment diversity \\
    Calibration & 256 & 128--512 & Not the bottleneck \\
    Layer grouping & None & - & Layers differ in sensitivity \\
    \bottomrule
  \end{tabular}
\end{table}

\section{Baseline Reproduction Details}
\label{app:baseline_details}

All evolutionary search baselines reported in this paper are our own reproductions under a matched evaluation protocol, not numbers taken from the original papers.
This section documents the reproduction setup for each.

\subsection{EvoESAP}
\label{app:evoesap_details}

All EvoESAP numbers are produced using the authors' released code\footnote{\url{https://github.com/ZongfangLiu/EvoESAP}} under our evaluation pipeline: same models, same FineWeb-Edu calibration data and KL-divergence fitness used for RCO, same lm-eval-harness configuration for downstream benchmarks, same compute environment.
We use REAP as the within-layer importance criterion since it is the strongest of the four criteria evaluated by \citet{liu2026evoesap}.
We run EvoESAP at the generation counts recommended in their paper (50 for OLMoE-1B-7B, 10 for Qwen3-30B-A3B).
Our EvoESAP numbers therefore reflect performance under a matched protocol with RCO and may differ from the numbers in \citet{liu2026evoesap}, who calibrate REAP on evol-codealpaca-v1 and search on tulu-3-sft-personas-math; the difference reflects calibration data choice, not algorithmic performance.

\subsection{EvoPress}
\label{app:evopress_details}

All EvoPress numbers for Qwen3-8B are our own reproductions, run using the authors' released code\footnote{\url{https://github.com/IST-DASLab/EvoPress}} under our evaluation pipeline (FineWeb-Edu calibration, KL divergence fitness, 256 calibration samples, same eval setup as RCO).
\citet{sieberling2024evopress} do not include Qwen3-8B in their reported experiments.
We run EvoPress for 100 generations, matching the search budget recommended by \citet{sieberling2024evopress}.
RCO and EvoPress are measured in the same compute environment.

\subsection{MxMoE}
\label{app:mxmoe_details}

All MxMoE numbers for Qwen1.5-MoE-A2.7B are produced from the authors' released code\footnote{\url{https://github.com/cat538/MxMoE}} under our evaluation pipeline: a shared GPTQ weight database (bitwidths 2--8, group size 128, asymmetric, no Hadamard) read by both methods, MxMoE's per-bitwidth sensitivity calibration command unchanged (\texttt{mxmoe.quant.quant calib --metric layer\_out\_norm}, FineWeb-Edu, 64 samples $\times$ 2048 tokens), the per-layer ILP allocator (binary $x_{e,n,s}$, objective $\min \sum_{e,n,s} x_{e,n,s} \cdot \delta_{e,n,s}$ on the layer-output sensitivity $\delta$, with \texttt{gate}$\equiv$\texttt{up} coupling and \texttt{down} free; attention is left at FP16 since their pipeline hardcodes \texttt{w\_bits=16} for it), the Wikitext-2 PPL protocol (100 chunks of 4096 tokens), the zero-shot task list (PIQA, HellaSwag, ARC-Easy, ARC-Challenge, Winogrande, LAMBADA-OpenAI), and our harness for both PPL and zero-shot. Our only modification is to remove the $+0.25$-bit group-128 metadata overhead from the bit accounting so that both methods report the same bit rate. We omit three MxMoE features that are orthogonal to the allocation algorithm: the online Hadamard transform (\texttt{gptq-had}, their flagship variant; the weight database is our own and would need to be rebuilt with Hadamard, though Hadamard would benefit RCO equally), joint accuracy and throughput optimization (does not affect the accuracy results we report), and weight-and-activation quantization (W4A4, W8A8; our comparison is weight-only).

\section{Full MCKP Results}
\label{app:mckp_full}

We evaluate on 13 scenarios (3 random instances per scenario, 5000 steps, learning rate 0.01) spanning correlated costs, tight budgets, adversarial instances, under-budget optima, and a large-scale instance ($N{=}1000$, $K{=}32$).
Table~\ref{tab:mckp_full} reports the five scenarios on which the methods diverge; the remaining eight place all four methods (equality, slack, Lagrangian, augmented Lagrangian) within instance-to-instance noise of each other.
Gap denotes the percentage below the DP optimum for the greedy-repaired discrete assignment; violation is the mean absolute constraint violation $|C(\boldsymbol{\alpha}) - B|$ averaged over all optimization steps.

\begin{table}[H]
  \caption{MCKP benchmark results on the scenarios that distinguish methods (5000 steps, 3 random instances per scenario).
  On the remaining eight scenarios all methods land within instance noise of each other.
  Gap: \% below DP optimum on the greedy-repaired discrete assignment (mean $\pm$ std over instances).
  Violation: mean $|E[\text{cost}] - B|$ averaged over all steps.
  Budget manifold maintains constraint violation $<10^{-8}$ throughout.
  $^\dagger$The slack variant of Budget manifold recovers the DP optimum on both \textit{cheap optimal} and \textit{mixed slack} (gap $<0.01\%$), while equality-constrained methods waste the surplus budget on inferior options.}
  \label{tab:mckp_full}
  \centering
  \small
  \begin{tabular}{@{}l r@{$\,\pm\,$}l r r@{$\,\pm\,$}l r r@{$\,\pm\,$}l r@{}}
    \toprule
    & \multicolumn{3}{c}{Budget manifold} & \multicolumn{3}{c}{Lagrangian} & \multicolumn{3}{c}{Aug.\ Lagrangian} \\
    \cmidrule(lr){2-4} \cmidrule(lr){5-7} \cmidrule(lr){8-10}
    Scenario & \multicolumn{2}{c}{Gap\%} & $|\text{v}|$ & \multicolumn{2}{c}{Gap\%} & $|\text{v}|$ & \multicolumn{2}{c}{Gap\%} & $|\text{v}|$ \\
    \midrule
    \multicolumn{10}{@{}l}{\textit{Scale: Lagrangian penalty degrades with $N$, $K$.}} \\
    \textit{large} ($N{=}200$, $K{=}16$)         & \textbf{0.02} & 0.02 & $<10^{-8}$ & 0.02  & 0.02 & 0.08 & 0.53 & 0.12 & 0.03 \\
    \textit{float costs} ($N{=}200$, $K{=}16$)   & \textbf{0.02} & 0.02 & $<10^{-8}$ & 0.02  & 0.02 & 0.08 & 0.64 & 0.04 & 0.03 \\
    \textit{huge} ($N{=}1000$, $K{=}32$)         & \textbf{0.00} & 0.00 & $<10^{-8}$ & 37.56 & 1.78 & 1.15 & 4.28 & 0.20 & 0.12 \\
    \midrule
    \multicolumn{10}{@{}l}{\textit{Under-budget optima: equality methods waste budget; the slack variant$^\dagger$ recovers the optimum.}} \\
    \textit{cheap optimal}                       & 48.71 & 1.41 & $<10^{-8}$ & 48.75 & 1.14 & 0.09 & 48.75 & 1.14 & 0.01 \\
    \textit{mixed slack}                         & 12.57 & 1.66 & $<10^{-8}$ & 12.35 & 1.65 & 0.08 & 12.35 & 1.48 & 0.00 \\
    \bottomrule
  \end{tabular}
\end{table}

The results fall into two regimes that distinguish methods, plus a residual.

\paragraph{Scale} (\textit{large}, \textit{float costs}, \textit{huge}).
Lagrangian penalty performance degrades sharply with $N$ and $K$: at $N{=}1000$ ($K{=}32$), Lagrangian plateaus at a 37.6\% gap while Budget manifold reaches the DP optimum exactly.
Augmented Lagrangian is more robust at scale but still trails Budget manifold by an order of magnitude on the largest instance and by 25--30$\times$ on the moderate-scale ones.
Across all three scenarios, Budget manifold's repaired gap stays at $\leq 0.02\%$.

\paragraph{Under-budget optima} (\textit{cheap optimal}, \textit{mixed slack}).
When the value-maximizing assignment costs less than the budget, equality-constrained methods are forced to spend the remaining budget on inferior options, losing up to half the optimal value.
The slack variable resolves this completely: Budget manifold (inequality) recovers the DP optimum exactly.
This scenario is relevant in practice when budget targets are set conservatively.

\paragraph{Easy / tied scenarios.}
On \textit{small easy}, \textit{medium}, \textit{tight budget}, \textit{adversarial}, and \textit{nonuniform}, the four methods reach comparable final gaps and comparable compute (within instance-to-instance noise of each other).
Constraint feasibility nonetheless separates them throughout training: Budget manifold (eq.) maintains $|C(\boldsymbol{\alpha}) - B| < 10^{-8}$ in every scenario, whereas Lagrangian methods carry a mean violation of order $10^{-1}$ (Figures~\ref{fig:mckp_adv_corr}, \ref{fig:mckp_bound_cheap}).

\paragraph{Compute-matched comparison across scenarios.}
Final-step gap masks substantial differences in convergence speed.
Table~\ref{tab:mckp_compute_full} reports the number of steps each method needs to reach within 1\% of the DP optimum on the eight scenarios that distinguish them.
Three regimes emerge.
At scale ($N \geq 200$), augmented Lagrangian does not converge to 1\% within the 5000-step budget on any of \textit{large}, \textit{float costs}, or \textit{huge}; vanilla Lagrangian additionally plateaus indefinitely on \textit{huge}.
On hard small-scale instances (\textit{boundary}, \textit{correlated}, \textit{correlated tight}), the manifold reaches the threshold 2--5$\times$ faster than augmented Lagrangian and 1.5--2.5$\times$ faster than vanilla Lagrangian.
On the under-budget scenarios (\textit{cheap optimal}, \textit{mixed slack}), only the slack variant converges; the equality manifold and both Lagrangian baselines plateau indefinitely on the under-budget portion of the surface.

\begin{table}[H]
  \caption{Compute (in optimization steps) for each method to reach within 1\% of the DP optimum, on the eight scenarios that distinguish methods (averaged over 3 random instances per scenario; \textit{never} indicates the threshold is not reached within 5000 steps).
  Bold marks the fastest method in each row.
  $^\dagger$Under-budget scenarios: equality methods structurally cannot leave the budget surface; the reported value is the slack variant, which converges in $\sim$500 steps.}
  \label{tab:mckp_compute_full}
  \centering
  \small
  \begin{tabular}{@{}l rrr@{}}
    \toprule
    Scenario & Manifold & Lagrangian & Aug.\ Lagrangian \\
    \midrule
    \multicolumn{4}{@{}l}{\textit{Scale: $N \geq 200$.}} \\
    \textit{huge} ($N{=}1000$, $K{=}32$)         & \textbf{594} & \textit{never} & \textit{never} \\
    \textit{large} ($N{=}200$, $K{=}16$)         & \textbf{583} & \textbf{583}   & \textit{never} \\
    \textit{float costs} ($N{=}200$, $K{=}16$)   & \textbf{583} & 585            & \textit{never} \\
    \midrule
    \multicolumn{4}{@{}l}{\textit{Hard small-scale: correlated values or boundary-clustered structure.}} \\
    \textit{boundary} ($N{=}50$, $K{=}8$)         & \textbf{185} & 453            & 900  \\
    \textit{correlated} ($N{=}50$, $K{=}8$)       & \textbf{381} & 581            & 1444 \\
    \textit{correlated tight} ($N{=}50$, $K{=}8$) & \textbf{308} & 482            & 1029 \\
    \midrule
    \multicolumn{4}{@{}l}{\textit{Under-budget optima: only the slack variant of the manifold converges.}} \\
    \textit{cheap optimal} ($N{=}50$, $K{=}8$)    & \textbf{562}$^\dagger$ & \textit{never} & \textit{never} \\
    \textit{mixed slack} ($N{=}50$, $K{=}8$)      & \textbf{514}$^\dagger$ & \textit{never} & \textit{never} \\
    \bottomrule
  \end{tabular}
\end{table}

Figures~\ref{fig:mckp_adv_corr} and~\ref{fig:mckp_bound_cheap} show per-instance convergence traces for representative scenarios.
The raw constraint violation traces (panel~c in each figure) make the oscillation of Lagrangian methods visible: the penalty weight overshoots and undershoots the budget repeatedly, while Budget manifold maintains violation at $\sim\!10^{-8}$ throughout.

\begin{figure}[H]
  \centering
  \begin{subfigure}[t]{\linewidth}
    \centering
    \includegraphics[width=\linewidth]{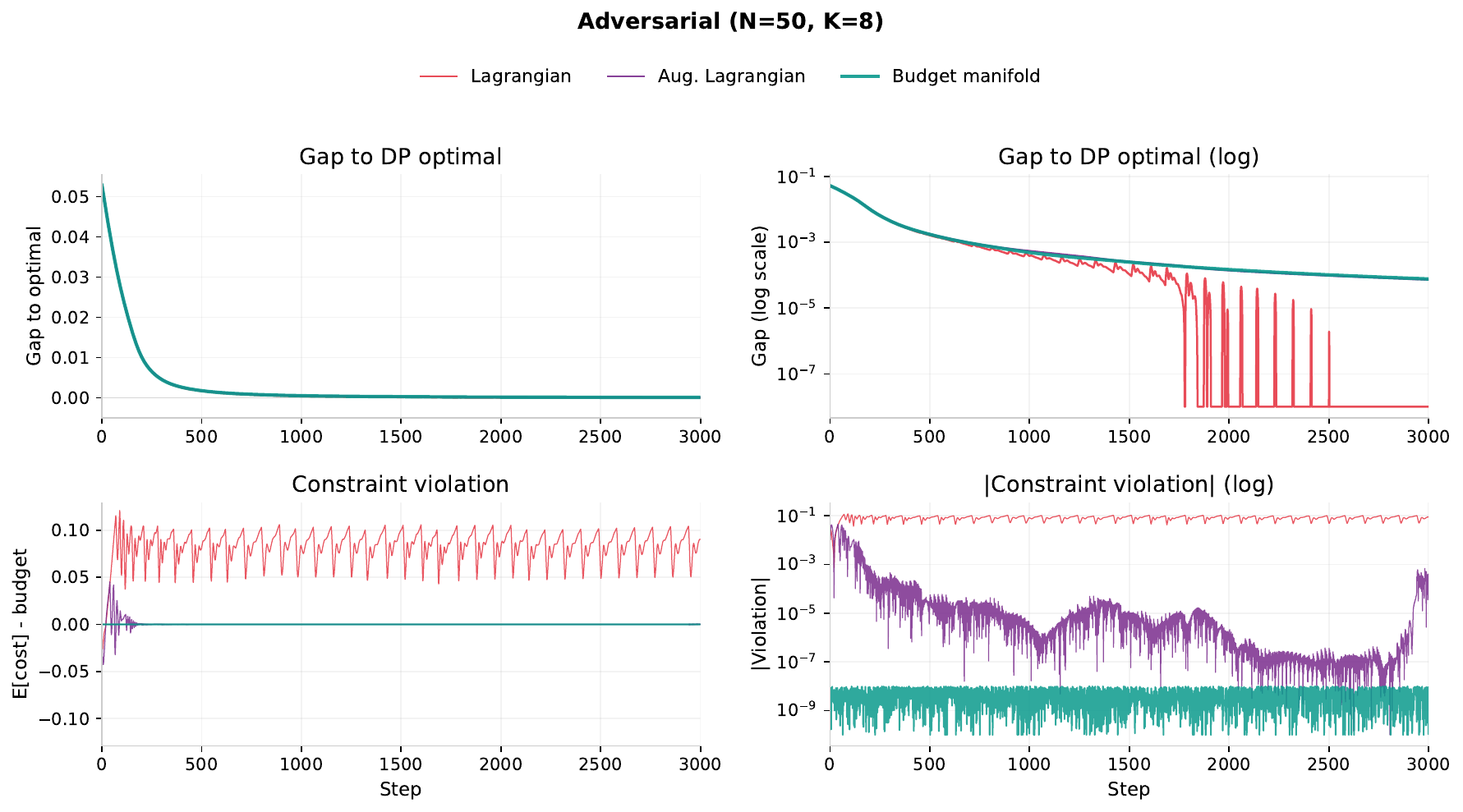}
    \caption{Adversarial scenario ($N{=}50$, $K{=}8$, single instance).
    Costs clustered near budget$/N$, values nearly tied.
    The Lagrangian penalty oscillates persistently (panel~c), never satisfying the constraint.
    Budget manifold converges to the DP optimum with zero violation.}
    \label{fig:mckp_adv}
  \end{subfigure}

  \vspace{3em}

  \begin{subfigure}[t]{\linewidth}
    \centering
    \includegraphics[width=\linewidth]{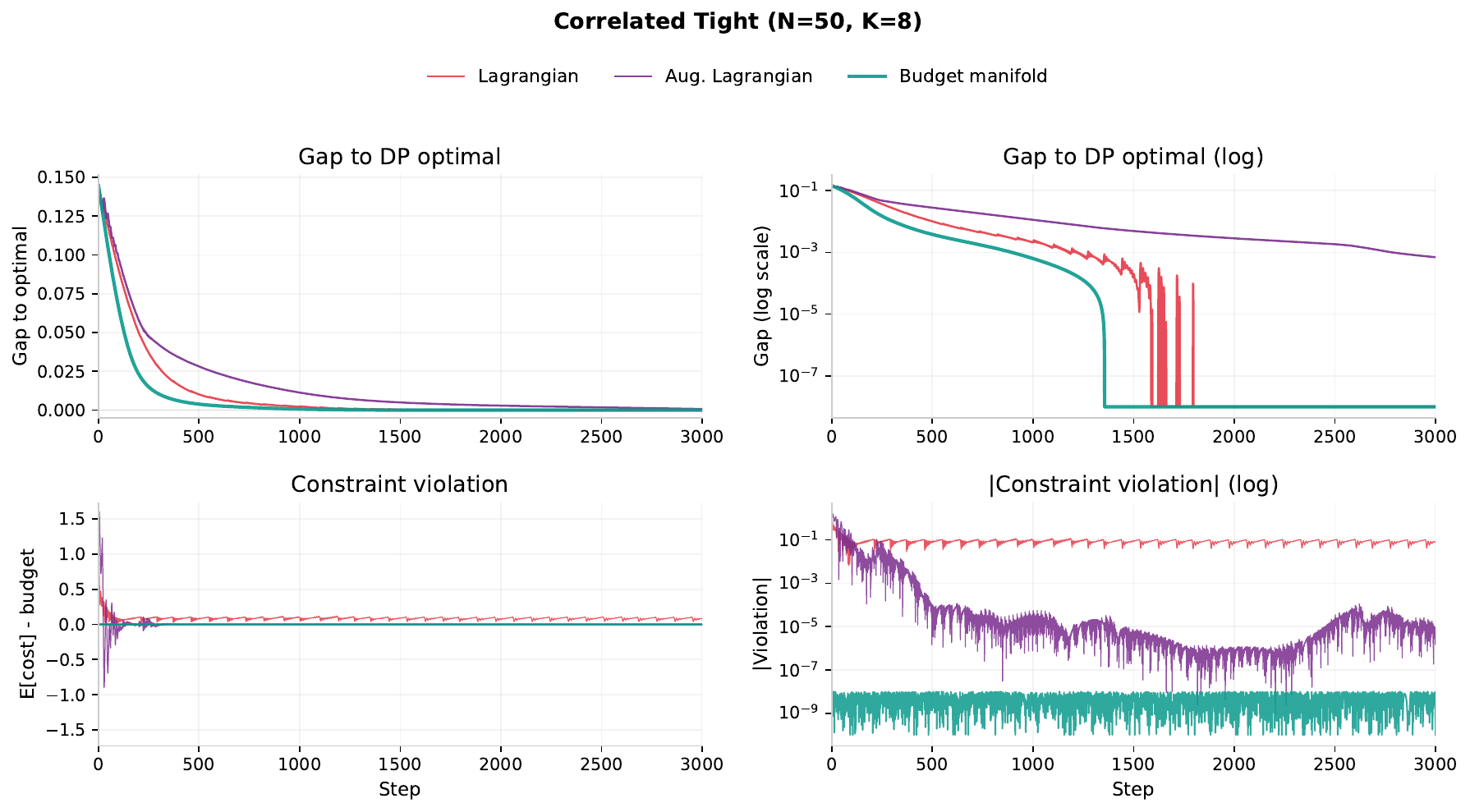}
    \caption{Correlated tight scenario ($N{=}50$, $K{=}8$, single instance).
    Strong cost-value correlation with budget only 15\% above minimum.
    Budget manifold converges faster (panels~a,b) and maintains $10^{-8}$ violation (panel~d) versus $10^{-2}$ for Lagrangian methods.}
    \label{fig:mckp_corr}
  \end{subfigure}
  \caption{MCKP convergence traces for adversarial and correlated tight scenarios.}
  \label{fig:mckp_adv_corr}
\end{figure}

\begin{figure}[H]
  \centering
  \begin{subfigure}[t]{\linewidth}
    \centering
    \includegraphics[width=\linewidth]{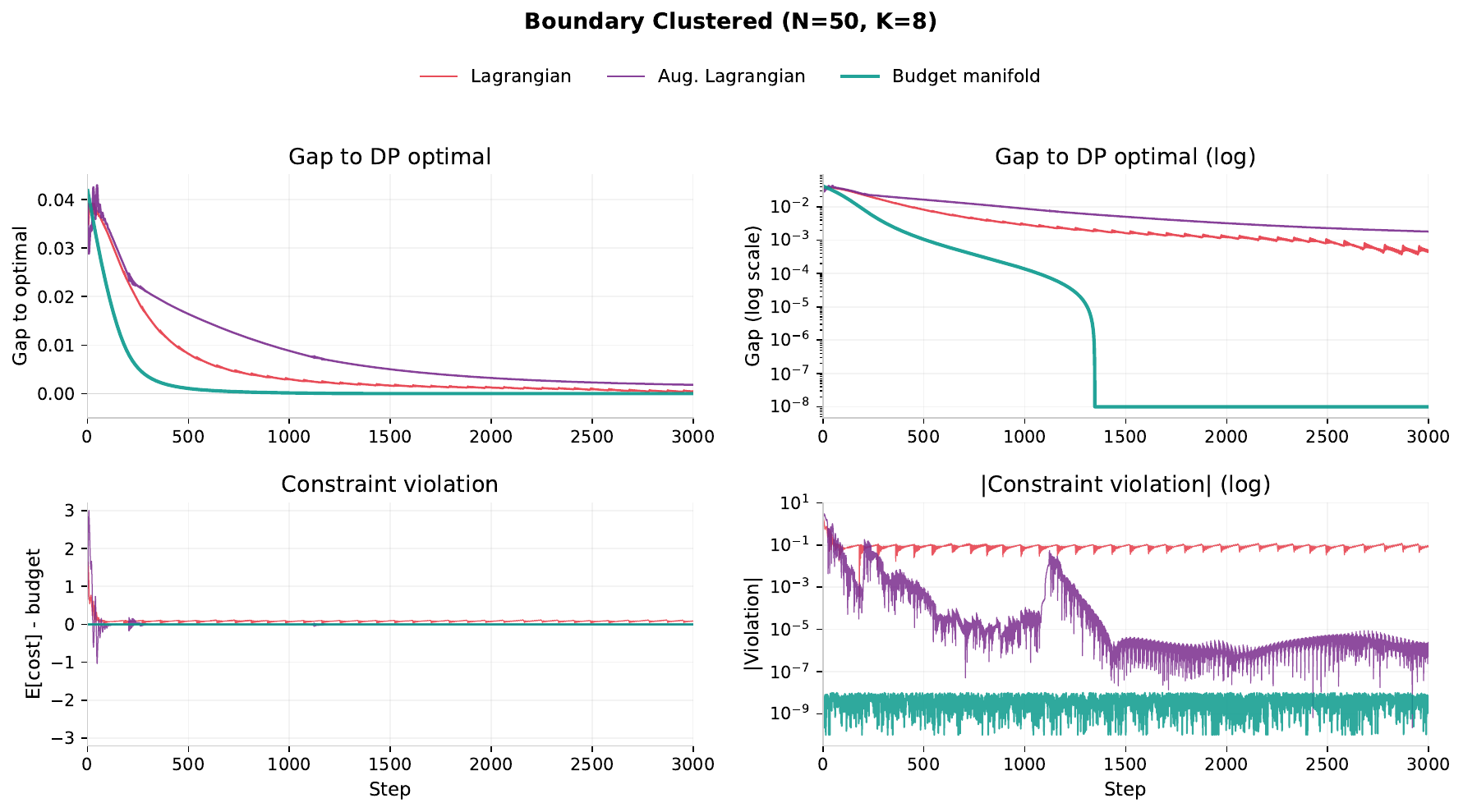}
    \caption{Boundary-clustered scenario ($N{=}50$, $K{=}8$, single instance).
    Many near-optimal solutions packed at the budget boundary.
    Log-scale convergence (panel~b) shows Budget manifold reaching $10^{-6}$ gap while Lagrangian methods plateau at $10^{-3}$.}
    \label{fig:mckp_boundary}
  \end{subfigure}

  \vspace{3em}

  \begin{subfigure}[t]{\linewidth}
    \centering
    \includegraphics[width=\linewidth]{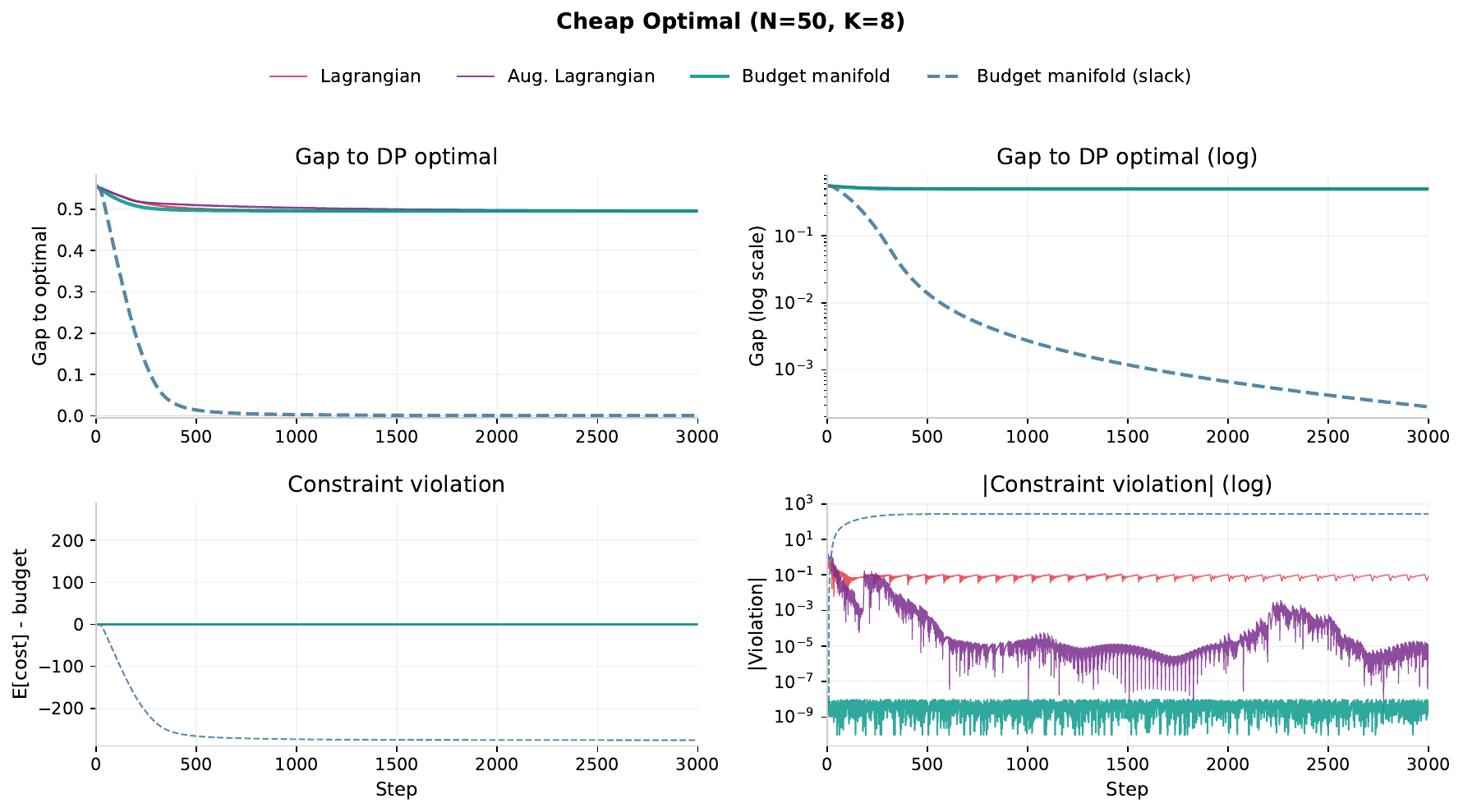}
    \caption{Cheap-optimal scenario ($N{=}50$, $K{=}8$, single instance) with slack variable.
    The DP optimum costs well under budget.
    Equality-constrained methods (Budget manifold (eq.), Lagrangian, Aug.~Lagrangian) all converge to $\sim$50\% of optimal (panel~a).
    Budget manifold with slack finds the true optimum, with the slack variable absorbing the unused budget (panel~c: $C(\alpha) - B \to -276$).}
    \label{fig:mckp_cheap}
  \end{subfigure}
  \caption{MCKP convergence traces for boundary-clustered and cheap-optimal scenarios.}
  \label{fig:mckp_bound_cheap}
\end{figure}


\begin{thebibliography}{44}
\providecommand{\natexlab}[1]{#1}
\providecommand{\url}[1]{\texttt{#1}}
\expandafter\ifx\csname urlstyle\endcsname\relax
  \providecommand{\doi}[1]{doi: #1}\else
  \providecommand{\doi}{doi: \begingroup \urlstyle{rm}\Url}\fi

\bibitem[Absil et~al.(2008)Absil, Mahony, and Sepulchre]{absil2008optimization}
P.-A. Absil, R.~Mahony, and R.~Sepulchre.
\newblock \emph{Optimization Algorithms on Matrix Manifolds}.
\newblock Princeton University Press, 2008.

\bibitem[Achiam et~al.(2017)Achiam, Held, Tamar, and
  Abbeel]{achiam2017constrained}
Joshua Achiam, David Held, Abbas Tamar, and Pieter Abbeel.
\newblock Constrained policy optimization.
\newblock In \emph{Proceedings of the 34th International Conference on Machine
  Learning}, volume~70 of \emph{PMLR}, pages 22--31, 2017.

\bibitem[Amari and Nagaoka(2000)]{amari2000methods}
Shun-ichi Amari and Hiroshi Nagaoka.
\newblock \emph{Methods of Information Geometry}, volume 191 of
  \emph{Translations of Mathematical Monographs}.
\newblock American Mathematical Society, 2000.

\bibitem[B{\'e}cigneul and Ganea(2019)]{becigneul2019riemannian}
Gary B{\'e}cigneul and Octavian-Eugen Ganea.
\newblock Riemannian adaptive optimization methods.
\newblock In \emph{International Conference on Learning Representations}, 2019.

\bibitem[Bengio et~al.(2013)Bengio, L{\'e}onard, and
  Courville]{bengio2013estimating}
Yoshua Bengio, Nicholas L{\'e}onard, and Aaron Courville.
\newblock Estimating or propagating gradients through stochastic neurons for
  conditional computation.
\newblock \emph{arXiv preprint arXiv:1308.3432}, 2013.

\bibitem[Berthet et~al.(2020)Berthet, Blondel, Teboul, Cuturi, Vert, and
  Bach]{berthet2020learning}
Quentin Berthet, Mathieu Blondel, Olivier Teboul, Marco Cuturi, Jean-Philippe
  Vert, and Francis Bach.
\newblock Learning with differentiable perturbed optimizers.
\newblock In \emph{Advances in Neural Information Processing Systems},
  volume~33, 2020.

\bibitem[Bonnabel(2013)]{bonnabel2013stochastic}
Silv{\`e}re Bonnabel.
\newblock Stochastic gradient descent on {Riemannian} manifolds.
\newblock \emph{IEEE Transactions on Automatic Control}, 58\penalty0
  (9):\penalty0 2217--2229, 2013.

\bibitem[Boumal(2023)]{boumal2023introduction}
Nicolas Boumal.
\newblock \emph{An Introduction to Optimization on Smooth Manifolds}.
\newblock Cambridge University Press, 2023.

\bibitem[Cai et~al.(2019)Cai, Zhu, and Han]{cai2019proxyless}
Han Cai, Ligeng Zhu, and Song Han.
\newblock {ProxylessNAS}: Direct neural architecture search on target task and
  hardware.
\newblock In \emph{International Conference on Learning Representations}, 2019.

\bibitem[Dong et~al.(2019)Dong, Yao, Gholami, Mahoney, and
  Keutzer]{dong2019hawq}
Zhen Dong, Zhewei Yao, Amir Gholami, Michael~W. Mahoney, and Kurt Keutzer.
\newblock {HAWQ}: {Hessian} aware quantization of neural networks with
  mixed-precision.
\newblock In \emph{Proceedings of the IEEE/CVF International Conference on
  Computer Vision}, pages 293--302, 2019.

\bibitem[Dong et~al.(2020)Dong, Yao, Cai, Arfeen, Gholami, Mahoney, and
  Keutzer]{dong2020hawqv2}
Zhen Dong, Zhewei Yao, Yaohui Cai, Daiyaan Arfeen, Amir Gholami, Michael~W.
  Mahoney, and Kurt Keutzer.
\newblock {HAWQ-V2}: {Hessian} aware trace-weighted quantization of neural
  networks.
\newblock In \emph{Advances in Neural Information Processing Systems},
  volume~33, pages 18518--18529, 2020.

\bibitem[Douik and Hassibi(2019)]{douik2019manifold}
Ahmed Douik and Babak Hassibi.
\newblock Manifold optimization over the set of doubly stochastic matrices: A
  second-order geometry.
\newblock \emph{IEEE Transactions on Signal Processing}, 67\penalty0
  (22):\penalty0 5761--5774, 2019.

\bibitem[Duanmu et~al.(2025)Duanmu, Li, Yuan, Zheng, Duan, Zhang, and
  Lin]{duanmu2025mxmoe}
Haojie Duanmu, Xiuhong Li, Zhihang Yuan, Size Zheng, Jiangfei Duan, Xingcheng
  Zhang, and Dahua Lin.
\newblock {MxMoE}: Mixed-precision quantization for {MoE} with accuracy and
  performance co-design.
\newblock In \emph{Proceedings of the 42nd International Conference on Machine
  Learning}, PMLR, 2025.

\bibitem[Fedus et~al.(2021)Fedus, Zoph, and
  Shazeer]{Fedus2021SwitchTransformers}
William Fedus, Barret Zoph, and Noam Shazeer.
\newblock Switch transformers: Scaling to trillion parameter models with simple
  and efficient sparsity.
\newblock \emph{arXiv preprint arXiv:2101.03961}, 2021.

\bibitem[Frantar and Alistarh(2022)]{frantar2022spdy}
Elias Frantar and Dan Alistarh.
\newblock {SPDY}: Accurate pruning with speedup guarantees.
\newblock In \emph{Proceedings of the 39th International Conference on Machine
  Learning}, volume 162 of \emph{PMLR}, pages 6726--6743. PMLR, 2022.

\bibitem[Frantar et~al.(2023)Frantar, Ashkboos, Hoefler, and
  Alistarh]{frantar2023gptq}
Elias Frantar, Saleh Ashkboos, Torsten Hoefler, and Dan Alistarh.
\newblock {GPTQ}: Accurate post-training quantization for generative
  pre-trained transformers.
\newblock In \emph{Proceedings of the 11th International Conference on Learning
  Representations}, 2023.

\bibitem[Huang et~al.(2022)Huang, Zheng, Huang, He, and He]{huang2022sdq}
Hai Huang, Ao~Zheng, Jianqiang Huang, Zhicheng He, and Tong He.
\newblock {SDQ}: Stochastic differentiable quantization with mixed precision.
\newblock In \emph{Proceedings of the 39th International Conference on Machine
  Learning}, volume 162 of \emph{PMLR}, pages 9295--9309, 2022.

\bibitem[Jang et~al.(2017)Jang, Gu, and Poole]{jang2017categorical}
Eric Jang, Shixiang Gu, and Ben Poole.
\newblock Categorical reparameterization with {G}umbel-softmax.
\newblock In \emph{International Conference on Learning Representations}, 2017.

\bibitem[Jin et~al.(2020)Jin, Wang, Slocum, Yang, Dai, Yan, and
  Feng]{jin2020rcdarts}
Xiaojie Jin, Jiang Wang, Joshua Slocum, Ming-Hsuan Yang, Shengyang Dai,
  Shuicheng Yan, and Jiashi Feng.
\newblock {RC-DARTS}: Resource constrained differentiable architecture search.
\newblock \emph{arXiv preprint arXiv:1912.12814}, 2020.

\bibitem[Lasby et~al.(2025)Lasby, Bayat, Googler, Plataniotis, and
  Hosseini]{lasby2025reap}
Michael Lasby, Reza Bayat, Ivan Googler, Konstantinos~N. Plataniotis, and
  Mahdi~S. Hosseini.
\newblock {REAP}: Router-weighted expert activation pruning.
\newblock \emph{arXiv preprint arXiv:2510.13999}, 2025.

\bibitem[Lebanon(2005)]{lebanon2005riemannian}
Guy Lebanon.
\newblock \emph{Riemannian Geometry and Statistical Machine Learning}.
\newblock PhD thesis, Carnegie Mellon University, 2005.

\bibitem[Li et~al.(2023)Li, Ning, Hong, Liu, Wang, Li, Zhong, Dai, Yang, and
  Wang]{li2023llmmq}
Shiyao Li, Xuefei Ning, Ke~Hong, Tengxuan Liu, Luning Wang, Xiuhong Li, Kai
  Zhong, Guohao Dai, Huazhong Yang, and Yu~Wang.
\newblock {LLM-MQ}: Mixed-precision quantization for efficient {LLM}
  deployment.
\newblock In \emph{NeurIPS 2023 Workshop on Efficient Natural Language and
  Speech Processing}, pages 1--5, 2023.

\bibitem[Liu et~al.(2023)Liu, Dong, Liu, Yu, and Gao]{liu2023bridging}
Liyuan Liu, Chengyu Dong, Xiaodong Liu, Bin Yu, and Jianfeng Gao.
\newblock Bridging discrete and backpropagation: Straight-through and beyond.
\newblock In \emph{Advances in Neural Information Processing Systems},
  volume~36, 2023.

\bibitem[Liu et~al.(2026)Liu, Tang, Sun, Wang, Shen, and Yuan]{liu2026evoesap}
Zongfang Liu, Shengkun Tang, Boyang Sun, Ping Wang, Zhiqiang Shen, and Xin
  Yuan.
\newblock {EvoESAP}: Non-uniform expert pruning for sparse {MoE}.
\newblock \emph{arXiv preprint arXiv:2603.06003}, 2026.

\bibitem[Lu et~al.(2024)Lu, Qi, Xu, Zhou, Huang, Zhang, Yan, and
  Zhang]{lu2024eep}
Xudong Lu, Liu Qi, Yuhui Xu, Aojun Zhou, Siyuan Huang, Bo~Zhang, Junchi Yan,
  and Hong-Jiang Zhang.
\newblock Not all experts are equal: Efficient expert pruning and skipping for
  mixture of experts.
\newblock In \emph{Proceedings of the 62nd Annual Meeting of the Association
  for Computational Linguistics}, 2024.

\bibitem[Maddison et~al.(2017)Maddison, Mnih, and Teh]{maddison2017concrete}
Chris~J. Maddison, Andriy Mnih, and Yee~Whye Teh.
\newblock The concrete distribution: A continuous relaxation of discrete random
  variables.
\newblock In \emph{International Conference on Learning Representations}, 2017.

\bibitem[Malinovskii et~al.(2025)Malinovskii, Panferov, Ilin, Guo,
  Richt\'{a}rik, and Alistarh]{malinovskii2024higgs}
Vladimir Malinovskii, Andrei Panferov, Ivan Ilin, Han Guo, Peter Richt\'{a}rik,
  and Dan Alistarh.
\newblock {HIGGS}: Pushing the limits of large language model quantization via
  the linearity theorem.
\newblock In \emph{Proceedings of the 2025 Conference of the North American
  Chapter of the Association for Computational Linguistics}, volume~1, pages
  10857--10886. Association for Computational Linguistics, 2025.

\bibitem[Nayman et~al.(2021)Nayman, Aflalo, Noy, and
  Zelnik-Manor]{nayman2021hardcore}
Niv Nayman, Yonathan Aflalo, Asaf Noy, and Lihi Zelnik-Manor.
\newblock {HardCoRe-NAS}: Hard constrained differentiable neural architecture
  search.
\newblock In \emph{Proceedings of the 38th International Conference on Machine
  Learning}, volume 139 of \emph{PMLR}, 2021.

\bibitem[Nickel and Kiela(2017)]{nickel2017poincare}
Maximilian Nickel and Douwe Kiela.
\newblock Poincar{\'e} embeddings for learning hierarchical representations.
\newblock In \emph{Advances in Neural Information Processing Systems},
  volume~30, 2017.

\bibitem[Shekhovtsov(2023)]{shekhovtsov2023cold}
Alexander Shekhovtsov.
\newblock Cold analysis of {R}ao-{B}lackwellized straight-through
  {G}umbel-softmax gradient estimator.
\newblock In \emph{Proceedings of the 40th International Conference on Machine
  Learning}, volume 202 of \emph{PMLR}, pages 30931--30955, 2023.

\bibitem[Sieberling et~al.(2025)Sieberling, Kuznedelev, Kurtic, and
  Alistarh]{sieberling2024evopress}
Oliver Sieberling, Denis Kuznedelev, Eldar Kurtic, and Dan Alistarh.
\newblock {EvoPress}: Accurate dynamic model compression via evolutionary
  search.
\newblock In \emph{Proceedings of the 42nd International Conference on Machine
  Learning}, PMLR, 2025.

\bibitem[Stooke et~al.(2020)Stooke, Achiam, and Abbeel]{stooke2020responsive}
Adam Stooke, Joshua Achiam, and Pieter Abbeel.
\newblock Responsive safety in reinforcement learning by {PID} {Lagrangian}
  methods.
\newblock In \emph{Proceedings of the 37th International Conference on Machine
  Learning}, volume 119 of \emph{PMLR}, pages 9133--9143, 2020.

\bibitem[Tan et~al.(2019)Tan, Chen, Pang, Vasudevan, Sandler, Howard, and
  Le]{tan2019mnasnet}
Mingxing Tan, Bo~Chen, Ruoming Pang, Vijay Vasudevan, Mark Sandler, Andrew
  Howard, and Quoc~V. Le.
\newblock {MnasNet}: Platform-aware neural architecture search for mobile.
\newblock In \emph{Proceedings of the IEEE/CVF Conference on Computer Vision
  and Pattern Recognition}, pages 2820--2828, 2019.

\bibitem[Tessler et~al.(2019)Tessler, Mankowitz, and Mannor]{tessler2019reward}
Chen Tessler, Daniel~J. Mankowitz, and Shie Mannor.
\newblock Reward constrained policy optimization.
\newblock In \emph{International Conference on Learning Representations}, 2019.

\bibitem[Vlastelica et~al.(2020)Vlastelica, Paulus, Musil, Martius, and
  Rol{\'i}nek]{vlastelica2020differentiation}
Marin Vlastelica, Anselm Paulus, Vit Musil, Georg Martius, and Michal
  Rol{\'i}nek.
\newblock Differentiation of blackbox combinatorial solvers.
\newblock In \emph{International Conference on Learning Representations}, 2020.

\bibitem[Wainwright and Jordan(2008)]{wainwright2008graphical}
Martin~J. Wainwright and Michael~I. Jordan.
\newblock Graphical models, exponential families, and variational inference.
\newblock \emph{Foundations and Trends in Machine Learning}, 1\penalty0
  (1--2):\penalty0 1--305, 2008.

\bibitem[Wang et~al.(2019)Wang, Liu, Lin, Lin, and Han]{wang2019haq}
Kuan Wang, Zhijian Liu, Yujun Lin, Ji~Lin, and Song Han.
\newblock {HAQ}: Hardware-aware automated quantization with mixed precision.
\newblock In \emph{Proceedings of the IEEE/CVF Conference on Computer Vision
  and Pattern Recognition}, pages 8612--8620, 2019.

\bibitem[Wu et~al.(2019)Wu, Dai, Zhang, Wang, Sun, Wu, Tian, Vajda, Jia, and
  Keutzer]{wu2019fbnet}
Bichen Wu, Xiaoliang Dai, Peizhao Zhang, Yanghan Wang, Fei Sun, Yiming Wu,
  Yuandong Tian, Peter Vajda, Yangqing Jia, and Kurt Keutzer.
\newblock {FBNet}: Hardware-aware efficient {ConvNet} design via differentiable
  neural architecture search.
\newblock In \emph{Proceedings of the IEEE/CVF Conference on Computer Vision
  and Pattern Recognition}, pages 10734--10742, 2019.

\bibitem[Yao et~al.(2021)Yao, Dong, Zheng, Gholami, Yu, Tan, Wang, Huang, Wang,
  Mahoney, and Keutzer]{yao2021hawqv3}
Zhewei Yao, Zhen Dong, Zhangcheng Zheng, Amir Gholami, Jiali Yu, Eric Tan,
  Leyuan Wang, Qijing Huang, Yida Wang, Michael~W. Mahoney, and Kurt Keutzer.
\newblock {HAWQ-V3}: Dyadic neural network quantization.
\newblock In \emph{Proceedings of the 38th International Conference on Machine
  Learning}, volume 139 of \emph{PMLR}, pages 11875--11886. PMLR, 2021.

\bibitem[Yin et~al.(2024)Yin, Wu, Zhang, Hsieh, Wang, Jia, Li, Jaiswal,
  Pechenizkiy, Liang, Bendersky, Wang, and Liu]{yin2024owl}
Lu~Yin, You Wu, Zhenyu Zhang, Cheng-Yu Hsieh, Yaqing Wang, Yiling Jia, Gen Li,
  Ajay Jaiswal, Mykola Pechenizkiy, Yi~Liang, Michael Bendersky, Zhangyang
  Wang, and Shiwei Liu.
\newblock {OWL}: Outlier weighed layerwise sparsity for accelerating large
  language models.
\newblock In \emph{Proceedings of the 41st International Conference on Machine
  Learning}, volume 235 of \emph{PMLR}, pages 57101--57115. PMLR, 2024.

\bibitem[Yin et~al.(2019)Yin, Lyu, Zhang, Osher, Qi, and
  Xin]{yin2019understanding}
Penghang Yin, Jiancheng Lyu, Shuai Zhang, Stanley Osher, Yingyong Qi, and Jack
  Xin.
\newblock Understanding straight-through estimator in training activation
  quantized neural nets.
\newblock In \emph{International Conference on Learning Representations}, 2019.

\bibitem[Zhang and Sra(2016)]{zhang2016first}
Hongyi Zhang and Suvrit Sra.
\newblock First-order methods for geodesically convex optimization.
\newblock In \emph{Conference on Learning Theory}, volume~49 of \emph{PMLR},
  pages 1617--1638, 2016.

\bibitem[Zhao et~al.(2025)Zhao, Derakhshan, Bharadwaj, Hyman, Dong, Jyothi, and
  Harris]{zhao2025impq}
Junchen Zhao, Ali Derakhshan, Dushyant Bharadwaj, Jayden~Kana Hyman, Junhao
  Dong, Sangeetha~Abdu Jyothi, and Ian Harris.
\newblock {IMPQ}: Interaction-aware layerwise mixed precision quantization for
  {LLMs}.
\newblock \emph{arXiv preprint arXiv:2509.15455}, 2025.

\bibitem[Zhou et~al.(2022)Zhou, Lei, Liu, Du, Huang, Zhao, Dai, Chen, Le, and
  Laudon]{zhou2022expert}
Yanqi Zhou, Tao Lei, Hanxiao Liu, Nan Du, Yanping Huang, Vincent Zhao, Andrew
  Dai, Zhifeng Chen, Quoc Le, and James Laudon.
\newblock Mixture-of-experts with expert choice routing.
\newblock In \emph{Advances in Neural Information Processing Systems},
  volume~35, 2022.

\end{thebibliography}
\end{document}